\newcommand{\ie}{\emph{i.e.}}
\newcommand{\eg}{\emph{e.g.}}
\newcommand{\cf}{{cf.\;}}
\newtheorem{theorem}{Theorem}
\newtheorem{definition}[theorem]{Definition}
\newcommand{\argmin}[1]{\mathop{\arg\!\min}_{#1}}
\newcommand{\dic}[1]{\textrm{\upaccent{\aboxshift{\char"12}}{$#1$}}}
\newcommand{\calC}{\cp{C}}
\newcommand{\calR}{\cp{R}}
\newcommand{\ps}[2]{\langle{#1},{#2}\rangle}
\newcommand{\psH}[2]{\langle{#1},{#2}\rangle_{\H}}
\newcommand{\norm}[1]{\|{#1}\|}
\newcommand{\normH}[1]{\|{#1}\|_\H}
\newcommand{\normHBig}[1]{\Big\|{#1}\Big\|_\H}
\def\R{\ensuremath{\mathds{R}}}
\def\X{\ensuremath{\mathds{X}}} 
\def\Y{\ensuremath{\mathds{Y}}} 
\def\H{\ensuremath{\mathds{H}}} 
\def\D{\ensuremath{\mathcal{D}}} 
\newcommand{\cb}[1]{{\ifmmode {\boldsymbol{#1}}\else ${\boldsymbol{#1}}$\fi}}
\newcommand{\cp}[1]{\ifmmode {\mathcal{#1}}\else ${\mathcal{#1}}$\fi}
\newcommand{\dx}{\dic{\bx}}
\newcommand{\dkappa}{\dic{\bkappa}}
\newcommand{\dK}{\dic{\bK}}
\newcommand{\dH}{\dic{\H}}
\newcommand{\bx}{\cb{x}}
\newcommand{\bu}{\cb{u}}
\newcommand{\bv}{\cb{v}}
\newcommand{\by}{\cb{y}}
\newcommand{\bxi}{\cb{\xi}}
\newcommand{\balpha}{\cb{\alpha}}
\newcommand{\bA}{\cb{A}}
\newcommand{\bK}{{\cb{K}}}
\newcommand{\bI}{{\bf I}}
\newcommand{\bkappa}{\cb{\kappa}}
\begin{document}


\title{
Analyzing sparse dictionaries \\for online learning with kernels}

\author{Paul~Honeine,~\IEEEmembership{Member~IEEE}
\thanks{P.~Honeine is with the Institut Charles Delaunay (CNRS), Universit\'e de technologie de Troyes, 10000, Troyes, France. Phone: +33(0)325715625; Fax: +33(0)325715699; E-mail: paul.honeine@utt.fr
}}

\markboth{}
{Honeine: Analyzing Sparse Dictionaries}



\sloppy

\maketitle

\begin{abstract}
Many signal processing and machine learning methods share essentially the same linear-in-the-parameter model, with as many parameters as available samples as in kernel-based machines. Sparse approximation is essential in many disciplines, with new challenges emerging in online learning with kernels. To this end, several sparsity measures have been proposed in the literature to quantify sparse dictionaries and constructing relevant ones, the most prolific ones being the distance, the approximation, the coherence and the Babel measures.  In this paper, we analyze sparse dictionaries based on these measures. By conducting an eigenvalue analysis, we show that these sparsity measures share many properties, including the linear independence condition and inducing a well-posed optimization problem. Furthermore, we prove that there exists a quasi-isometry between the parameter (i.e., dual) space and the dictionary's induced feature space.

\end{abstract}

\begin{keywords} 
Sparse approximation, adaptive filtering, kernel-based methods, Gram matrix, machine learning, pattern recognition.
\end{keywords}

%
\IEEEpeerreviewmaketitle

%
%

\section{Introduction}

\PARstart{S}{parse} approximation is essential in many disciplines due to the advent of data deluge in the era of ``Big Data'', as illustrated by the extensive literature of compressed sensing (see \cite{spmag-cs} and references therein). Sparsity promoting is crucial in signal processing and machine learning, such as Gaussian processes \cite{Csato01}, kernel-based methods \cite{Wu06}, Bayesian learning \cite{Wipf03perspectives}, as well as neural networks \cite{Platt91} with pruning \cite{Cun90OptimalBrainDamage} and the more recent dropout principle in deep learning \cite{Dropout14}. 

Many learning machines share essentially the same model, in a linear or a nonlinear --- kernel --- form, including support vector machines \cite{Vap98}, Gaussian processes \cite{gpml} and radial-basis-function networks such as resource-allocating networks \cite{Platt91} and more recently neural networks for function approximation \cite{Huang05ageneralized}; see also the seminal work of Poggio and Smale \cite{Poggio03themathematics}. All these learning machines rely on the well-known ``Representer Theorem'' \cite{Representer}, which defines a linear-in-the-parameters model with as many parameters as training samples.

A sparse approximation of this model is often required for many interesting and desirable properties, such as enforcing the interpretation of the results and providing a computational tractable problem for large-scale datasets. While this issue has been studied within the last 15 years in kernel-based machines \cite{Scholkopf99inputspace,ZeroNorm03}, recent developments in sparse approximation and compressed sensing open the way to new advances. Moreover, online learning brings new challenges to sparsity in signal processing and machine learning, when a new sample is available at each instant, thus leads to an incrementation of the number of parameters. Therefore, one needs to control such complexity growth, by selecting samples that take part in the model formulation; in the literature, these contributing samples are called atoms and are collected in a set called dictionary.

The construction from available samples of a pertinent dictionary and the measure of its relevance have been investigated in the literature with several sparsification criteria, each being coupled with a sparsity measure that defines the diversity captured by the dictionary. 
The oldest sparsity criterion is the distance introduced in \cite{Platt91} for controlling the complexity of the structure of radial-basis-function networks in resource-allocating networks \cite{Rosipal97RAN}; see also \cite{
Yang2013,Vukovic2013} for recent advances. The criterion constructs a dictionary by lower-bounding the pairwise distance between its atoms. Another criterion, the approximation criterion, explores a more deeper analysis of the atoms, by lower-bounding the error of approximating any atom by the other atoms, as investigated in \cite{Csato02} for Gaussian processes, in \cite{Eng04} for a kernel recursive least squares algorithm, 
and more recently in \cite{12.tpami} for a kernel principal component analysis. A third criterion takes advantage of recent developments in the sparse approximation literature \cite{Tro04} and compressed sensing \cite{Elad2010book}, by upper-bounding the coherence between any pair of atoms. Initially introduced for online learning with kernels \cite{Hon07.isit,Ric09.tsp} and learning in sensor networks \cite{Hon08.globecom,Hon10.eur}, it has been extensively considered for one-class classification \cite{12.eusipco.oneclass,12.ssp.one_class}, for online learning with multiple kernels \cite{Yukawa12Multikernel,Tobar14Multikernel} and multiple dictionaries \cite{Ishida2013Multikernel} and for multiple-output learning \cite{13.spl.dictionary}. The Babel measure and its criterion provide a more comprehensive analysis of the dictionary structure, by limiting the cumulative coherence \cite{Fan2014}. To the best of our knowledge, there is no work that studies all these sparsity measures and criteria.

Independently of the sparsification criterion and the resulting dictionary, many algorithms have been introduced to update the model. As it might be expected, the wide class of linear adaptive filters has been extensively investigated for online learning with kernels, by revisiting popular algorithms such as the least mean squares (LMS), the normalized LMS (NLMS), the affine projection (AP), and the recursive least squares (RLS) algorithms; see for instance \cite{Say03} for a review of linear adaptive filters. There exists two frameworks to develop adaptive algorithms in online learning with kernels, thanks to the underlying linear-in-the-parameters model: a functional (\ie, feature) framework and a dual (\ie, parameter) one. Within the functional framework, the optimization is operated in the feature space, by estimating and updating within the subspace spanned by the atoms of the dictionary. This framework has been widely investigated for online learning with kernels; see for instance \cite{Kiv04,Yukawa12Multikernel} 
as well as \cite{Sma06} for a theoretical analysis and \cite{KernelAdaptiveFiltering} for a comprehensive study. The second framework is based on estimating the parameters of the model, thus solving an optimization problem in the so-called dual space. 
This framework has been extensively explored in the literature due to its simplicity, with a NLMS algorithm \cite{Hon07.isit}, an AP algorithm \cite{Ric09.tsp}, and a RLS algorithm \cite{Eng04,Hon07.gretsi.b}. For an overview of this framework, see \cite{07.PhD} and references therein. To the best of our knowledge, only Yukawa pointed out the distinction between these two frameworks in \cite[Section 6.6.4]{Yukawa_lecture}. The relationship between the two frameworks has not been studied before, namely connecting the feature space to the dual space.

The aim of this paper is to study all the aforementioned sparsity measures and sparsification criteria (\cf Section~\ref{sec:intro_spars}). To this end, we provide an analysis of the eigenvalues associated to a sparse dictionary, and provide upper and lower bounds in terms of the sparsity measures (\cf Section~\ref{sec:bounds}). 
We show that the lower bounds provide conditions on the linear independence of the atoms (\cf Section~\ref{sec:lin.indep}). Moreover, we show that the condition number of the Gram matrix associated to a sparse dictionary is upper-bounded, illustrating the impact of the sparsity measures on the conditioning of the optimization problem (\cf Section~\ref{sec:cond.number}). A major result provided in this paper is the connection between the dictionary's induced feature space and the dual space, by showing that there exists a quasi-isometry between these spaces when dealing with sparse dictionaries
. These results allow to bridge the gaps between the two aforementioned frameworks (\cf Sections~\ref{sec:isometry} and \ref{sec:isometry_ip}). The big picture is illustrated in \tablename~\ref{tab:birdseye}.


\section{Kernel-based learning machines}

A learning problem aims to find the relation $\psi(\cdot)$ between a compact subspace of a Banach space $\X$ of $\R^d$ and a compact $\Y$ of $\R$ called output space, from on 
a set of available samples, denoted $\{(\bx_1,y_1), (\bx_2,y_2), \ldots , (\bx_n,y_n)\}$ with $(\bx_k,y_k) \in \X\times\Y$. 

\subsection{Batch learning with kernels}

Considering a given loss function $\calC(\cdot,\cdot)$ defined on $\Y \times \Y$ that measures the error between the desired output and the estimated one with $\psi(\cdot)$, the optimization problem consists in minimizing a regularized empirical risk as follows
\begin{equation}\label{eq:risk}
    \argmin{\psi(\cdot) \in \H} 
    \sum_{i=1}^n \calC(\psi(\bx_i),y_i) + \epsilon \, \calR (\|\psi(\cdot)\|_\H^2),
\end{equation}
where $\H$ is the space of candidate functions and $\epsilon$ controls the tradeoff between the fitness error (first term) and the regularity of the solution (second term) where $\calR(\cdot)$ is a monotonically increasing function. Examples of loss functions are the quadratic loss $| \psi(\bx_i) - y_i |^2$ and the hinge loss $( 1 - \psi(\bx_i) y_i )_+$ of the support vector machines.

By using the formalism of the reproducing kernel Hilbert space (RKHS) as the space $\H$ of candidate functions, kernel-based machines incorporate prior knowledge by using a kernel. Let $\kappa\!:\X \times \X\rightarrow\R$ be a reproducing kernel, and $(\H,\psH{\cdot}{\cdot})$ the induced RKHS with its inner product. The reproducing property states that any function $\psi(\cdot)$ of $\H$ can be evaluated at any sample $\bx_i$ of $\X$ using $\psi(\bx_i) = \psH{\psi(\cdot)}{\kappa(\bx_i,\cdot)}$. 
This property shows that any sample $\bx_i$ of $\X$ is represented with $\kappa(\cdot,\bx_i)$ in the space $\H$, also called feature space. Moreover, the reproducing property leads to the so-called kernel trick, that is for any pair of samples $(\bx_i,\bx_j)$, we have $\psH{\kappa(\cdot,\bx_i)}{\kappa(\cdot,\bx_j)} = \kappa(\bx_i,\bx_j)$. 
Commonly used kernels are the linear kernel with $\ps{\bx_i}{\bx_j}$, the polynomial kernel $\left(\ps{\bx_i}{\bx_j} + c\right)^p$ and the Gaussian kernel $\exp\left(\frac{-1}{2\sigma^2} \|\bx_i - \bx_j\|^2\right)$.


The Representer Theorem is a cornerstone of kernel-based machines \cite{Representer}. It states that the solution of the optimization problem \eqref{eq:risk} takes the form
\begin{equation}\label{eq:repr}
	\psi(\cdot) = \sum_{i=1}^{n} \alpha_i \, \kappa(\bx_i,\cdot).
\end{equation}
This theorem shows that the functional optimization problem \eqref{eq:risk} is equivalent to the estimation of $n$ unknowns, $\alpha_1, \alpha_2, \ldots, \alpha_n$ in \eqref{eq:repr}. By injecting the above expression into \eqref{eq:repr}, we get the the (often called) dual problem. This duality is illustrated next for the kernel ridge regression problem.

\begin{table}[t]
\renewcommand{\arraystretch}{1.1} 
\begin{tabular}{lccccc}
 					& \rotatebox{90}{Distance} & \rotatebox{90}{Approximation} & \rotatebox{90}{Coherence} & \rotatebox{90}{Babel}
					&
					\!\!Section\!\!\!\! 
					\\\hline
\rowcolor[gray]{.97} Reference: most known work & \cite{Platt91} & \cite{Csato02} & \cite{Ric09.tsp}
& \cite{Tro04}  & \ref{sec:intro_spars} \\
Reference: more recent work & \cite{KernelAdaptiveFiltering} & \cite{12.tpami} & \cite{13.spl.dictionary} & \cite{Fan2014} & \ref{sec:intro_spars} \\
\rowcolor[gray]{.97} Eigenvalues: lower bounds & $\checkmark$ & \cite{12.tpami} & \cite{Hon07.isit} & $\checkmark$ & \ref{sec:bounds} \\
Eigenvalues: upper bounds & $\checkmark$ & \cite{12.tpami} & $\checkmark$ & $\checkmark$ & \ref{sec:bounds} \\
\rowcolor[gray]{.97} Linear independence & $\checkmark$ & $\checkmark$ & \cite{Hon07.isit} & \cite{Ric09.tsp} & \ref{sec:lin.indep}\\
Condition number & $\checkmark$ & $\checkmark$ & $\checkmark$ & $\checkmark$ & \ref{sec:cond.number} \\
\rowcolor[gray]{.97} Isometry property: distances & $\checkmark$ & $\checkmark$ & $\checkmark$ & $\checkmark$ & \ref{sec:isometry} \\
Isometry property: inner products & $\checkmark$ & $\checkmark$ & $\checkmark$ & $\checkmark$ & \ref{sec:isometry_ip} \\
\hline
\end{tabular} 
\caption{A birds eye view of the theoretical insights studied in this paper. Some of these results were previously derived for unit-norm atoms, as shown with the references given in the table. In this work, we provide an extensive study that completes the analysis to all sparsity measures. 
We derive new theoretical insights on connecting the dual space with the dictionary's induced feature space. All the results are generalized to 
any type of kernel, beyond the unit-norm case.}
\label{tab:birdseye}
\end{table}

\subsection{Kernel ridge regression algorithms}

In the kernel ridge regression
,  the quadratic loss and regularization are used in the optimization problem, namely
\begin{equation}\label{eq:risk_ridge}
    \argmin{\psi(\cdot) \in \H}
    \tfrac12 \sum_{i=1}^n | \psi(\bx_i) - y_i |^2 + \epsilon \, \tfrac12 \|\psi(\cdot)\|_\H^2.
\end{equation}
By injecting the the model \eqref{eq:repr} in the above expression, we get the following dual optimization problem:
\begin{equation}\label{eq:risk_ridge_dual}
    \argmin{\balpha \in \R^n} 
    \tfrac12 \|  \bK\balpha - \by \|^2 + \epsilon \, \tfrac12 \balpha\!^\top \!\bK \balpha,
\end{equation}
where $\bK$ is the Gram matrix whose $(i,j)$-th entry is $\kappa(\bx_i,\bx_j)$, $\by$ and $\balpha$ are vectors whose $i$-th entries are $y_i$ and $\alpha_i$, respectively. In the above expression, we have used the relation 
\begin{equation*}
\|\psi(\cdot)\|_\H^2 
= \big\| \sum_{i=1}^{n} \alpha_i \kappa(\bx_i,\cdot) \big\|_\H^2 
= \!\! \sum_{i,j=1}^{n} \! \alpha_i \alpha_j \kappa(\bx_i,\bx_j)
= \balpha\!^\top \! \bK \balpha.
\end{equation*}
The solution of this optimization problem is given by the ``normal equations'', $( \bK\!^\top \bK + \epsilon \bK\!^\top) \, \balpha = \bK\!^\top \by$, which yields\footnote{The expression \eqref{eq:ridge_solution1} is often simplified to $\balpha = ( \bK + \epsilon \, \bI)^{-1}\by$. This equivalence is granted only when the matrix $\bK$ is nonsingular, an assumption that is unfortunately not satisfied in general. This is due to the linear dependence of the training samples.
}
\begin{equation}\label{eq:ridge_solution1}
	 \balpha = \big( \bK\!^\top \bK + \epsilon \bK\!^\top \big)^{-1}\bK\!^\top \by.
\end{equation}

 
\subsection*{Regularization: $\|\psi(\cdot)\|_\H$ versus $\|\balpha\|$}

The regularization in the dual optimization problem \eqref{eq:risk_ridge_dual} is essentially a Tikhonov regularization of the form $\|\cb{\Gamma} \balpha\|^2$ (where we have in our case $\cb{\Gamma}\!^\top \cb{\Gamma} = \epsilon \bK$). In the literature, the Tikhonov matrix $\cb{\Gamma}$ is often chosen as the identity matrix, up to a multiplicative constant, giving preference to solutions with smaller norms. The kernel ridge regression becomes 
\begin{equation}\label{eq:risk_ridge_dual_hybrid}
    \argmin{\balpha \in \R^n} 
    \tfrac12 \|  \bK\balpha - \by \|^2 + \epsilon \, \tfrac12 \|\balpha\|^2.
\end{equation}
With the ``normal equations'' $( \bK\!^\top \bK + \epsilon \, \bI ) \, \balpha = \bK\!^\top \by$, we get
\begin{equation*}
	\balpha = \big( \bK\!^\top \bK + \epsilon \, \bI \big)^{-1} \bK\!^\top \by.
\end{equation*}



Connections between the regularization in the functional space with $\|\psi(\cdot)\|_\H$ and the regularization in the dual space with $\|\balpha\|$ are not straightforward. The only result is based on the fact that $\|\psi(\cdot)\|_\H^2 = \balpha\!^\top \bK \balpha$, and therefore we have from the Rayleigh's quotient and the Courant-Fischer Minimax Theorem \cite[Theorem~8.1.2]{matrix13}:
\begin{equation*}
	\lambda_{\min} \leq \frac{\|\psi(\cdot)\|_\H^2
	}{\|\balpha\|^2} \leq \lambda_{\max}
\end{equation*}
where $\lambda_{\min}$ and $\lambda_{\max}$ are the smallest and largest eigenvalues of the Gram matrix $\bK$. As a consequence, minimizing $\|\psi(\cdot)\|_\H^2$ yields the upper bound on the norm of the parameter vector with $\|\balpha\|^2 \leq \lambda_{\min}^{-1} \|\psi(\cdot)\|_\H^2$, while minimizing $\|\balpha\|^2$  yields the following upper bound on the norm in the functional space with $\|\psi(\cdot)\|_\H^2 \leq \lambda_{\max} \|\balpha\|^2$.

It turns out that sparse dictionaries provide models with tighter bounds
, as studied in detail in Section~\ref{sec:topology}.



\subsection{Online learning with kernels}

The Representer Theorem with its linear-in-the-parameters model \eqref{eq:repr} constitutes a bottleneck for online learning, which is required for real-time system identification, Big-Data processing and distributed optimization (\eg, sensor networks). Indeed, in an online setting, the solution should be updated recursively based on a new information available at each instant, namely a novel $(\bx_t,y_t)$ at instant $t$. Thus, by including the new pair $(\bx_t,y_t)$ in the training set, the Representer Theorem dictates a new parameter $\alpha_t$ to be added to the set of unknowns. As a consequence, the order of the linear-in-the-parameters model is continuously increasing.

To overcome this drawback, one needs to control the growth of the model order at each instant, by keeping only a fraction of the kernel functions in the expansion \eqref{eq:repr}. The reduced-order model at instant $t$ takes the form
\begin{equation}\label{eq:repr_m}
       \psi_t(\cdot) = \sum_{j=1}^{m} \alpha_{j,t} \, \kappa(\dx_j,\cdot),
\end{equation}
for some order $m$, fixed or controlled, with $m \ll t$. Each $\dx_j$ is chosen 
from all available samples up to instant $t$, namely\footnote{We consider that each $\dx_j$ is a sample selected from available samples, that is $\dx_j$ is some $\bx_{\omega_j}$ with $\omega_j \in \{1, 2, \ldots, t\}$. By using the notation $\dx_j$ in this paper, as opposed to $\bx_{\omega_j}$, the elements $\dx_j$ in the expansion \eqref{eq:repr_m} need not be samples drawn from the distribution. This difference is investigated in\cite{12.ssp.dictionary,13.gretsi.dictionary}, by updating $\dx_j$ at each instant in order to minimize the prediction error.
} $\{\dx_1, \dx_2, \ldots, \dx_m\} \subset \{\bx_1, \bx_2, \ldots, \bx_t\}$. We denote by dictionary the set $\D=\{\kappa(\dx_1,\cdot), \kappa(\dx_2,\cdot), \ldots, \kappa(\dx_m,\cdot)\}$, by atoms its elements, and by 
$\dH$ the space spanned by $\D$. In this paper, we do not restrictive ourselves to unit-norm\footnote{Throughout this paper, we outline the special case of unit-norm atoms since such setting is often considered in the literature. Unit-norm atoms arise when dealing either with the linear kernel when $\|\bx\|=1$ for any $\bx\in\X$, or with a unit-norm kernel, namely $\kappa(\bx,\bx)=1$ for any $\bx\in\X$
.} atoms. Let
\begin{equation*}
	r^2 = \inf_{\bx \in \X} \kappa(\bx,\bx)
	\qquad \text{and} \qquad
	R^2 = \sup_{\bx \in \X} \kappa(\bx,\bx).
\end{equation*}


The optimization problem is two-fold at each instant: selecting the proper dictionary $\D=\{\kappa(\dx_1,\cdot), \kappa(\dx_2,\cdot), \ldots, \kappa(\dx_m,\cdot)\}$ and estimating the corresponding parameters $\alpha_1, \alpha_2, \ldots, \alpha_m$. Before studying in detail the former in Section~\ref{sec:intro_spars}, the latter is outlined next.

\subsection*{Notation}

Throughout this paper, all quantities associated to the dictionary have an accent (by analogy to phonetics, where stress accents are associated to prominence). This is the case for instance of the $m$-by-$1$ vector $\dkappa(\cdot)$ whose $j$-th entry is $\kappa(\dx_j,\cdot)$ and the Gram matrix $\dK$ of size $m$-by-$m$ whose $(i,j)$-th entry is $\kappa(\dx_i,\dx_j)$. The eigenvalues of this matrix are denoted $\dic{\lambda}_{1}, \dic{\lambda}_{2}, \ldots, \dic{\lambda}_{m}$, given in non-increasing order.




\subsection{Parameter estimation for online learning}\label{sec:intro_param}

Before studying in Section~\ref{sec:intro_spars} the dictionary in terms of sparsity measures and sparsification criteria for constructing a relevant dictionary, we assume for now that the dictionary is known. From \eqref{eq:repr_m}, the problem of determining the model can be solved in two ways: the functional framework where $\psi_t(\cdot)$ is updated from $\psi_{t-1}(\cdot)$, and the dual framework with the update of the parameter vector $\balpha_t$ from $\balpha_{t-1}$. These two frameworks are summarized next, starting with the latter since its vector-based formulation is straightforward.


We denote by $e_t = y_t - \psi_{t-1}(\bx_t)
$ the prediction error.

\subsection*{Dual framework}

This framework explores the model \eqref{eq:repr_m} written, 
for any $\bx$,
\begin{equation}\label{eq:repr_M}
        \psi_t(\bx) = \balpha_t^\top \dkappa(\bx),
\end{equation}
where $\balpha_t = [\alpha_{1,t} ~~ \alpha_{2,t} ~~ \cdots ~~ \alpha_{m,t}]\!^\top$ is updated from the previous estimate, \ie, $\balpha_{t-1}$, in the dual space $\R^m$. It is easy to see in \eqref{eq:repr_M} the structure of a a finite-impulse-response filter, the filter input being $\dkappa(\bx)$ and its coefficient vector $\balpha_t$.




By considering the instantaneous risk 
	$\tfrac12 | y_t - \balpha^\top \dkappa(\bx_t) |^2 + \epsilon \, \tfrac12 \|\balpha\|^2$, 
where the first term is the quadratic instantaneous error $e_t^2$, we get the stochastic gradient descent rule
\begin{equation}\label{eq:stochastic_gradient}
    \balpha_t = \balpha_{t-1} + \eta_t \big( e_t \, \dkappa(\bx_t)  - \epsilon \, \balpha_{t-1} \big).
\end{equation}
When dealing with the functional regularization $\|\psi(\cdot)\|_\H^2$ as in \eqref{eq:risk_ridge_dual}, this regularization is approximated with $\balpha\!^\top \! \dK \balpha$, which yields the modified version
\begin{equation}\label{eq:stochastic_gradient_K}
    \balpha_t = \balpha_{t-1} + \eta_t \big( e_t \, \dkappa(\bx_t) - \epsilon \, \dK \balpha_{t-1} \big).
\end{equation}
The two rules \eqref{eq:stochastic_gradient} and \eqref{eq:stochastic_gradient_K} reduce to the LMS algorithm when $\epsilon=0$. 
Another algorithm is the NLMS, which provides a scale insensitive version with 
\begin{equation*}
    \balpha_t = \balpha_{t-1} +  \frac{\eta_t}{\|\dkappa(\bx_t)\|^2 + \epsilon}
                        \, e_t \, \dkappa(\bx_t).
\end{equation*}
See \cite{Hon07.isit} for more details. An extension to an AP algorithm is proposed in \cite{Ric09.tsp}, while a RLS algorithm is presented in \cite{Eng04,Hon07.gretsi.b}. A comprehensive study of adaptive filter algorithms in the dual framework is given in \cite{07.PhD}. See also 
\cite{Yukawa12Multikernel,Ishida2013Multikernel,Fan2014}.

\subsection*{Functional framework}

The functional framework considers the definition of the model \eqref{eq:repr_m} in the RKHS, with the form
\begin{equation}\label{eq:repr_H}
        \psi_t(\bx) = \psH{\psi_t(\cdot)}{\kappa(\bx,\cdot)},
\end{equation}
for any $\bx \in \X$. The estimation of $\psi_t(\cdot)$ from the previous estimate $\psi_{t-1}(\cdot)$ is operated in the RKHS $\H$, or more specifically in the span of the available dictionary, \ie, $\psi_t(\cdot) \in \dH \subset \H$.

By considering the instantaneous risk $\tfrac12 | \psi(\bx_t) - y_i |^2 + \epsilon \, \tfrac12 \|\psi(\cdot)\|_\H^2$, the stochastic gradient descent in $\H$ is
\begin{equation*}
    \psi_t(\cdot) = \psi_{t-1}(\cdot) + \eta_t \big( e_t \, \kappa(\bx_t,\cdot)  - \epsilon \, \psi_{t-1}(\cdot) \big).
\end{equation*}
By analogy with the dual framework, other algorithms can also be described such as an LMS, a NLMS, an AP, and a RLS algorithms. See \cite{KernelAdaptiveFiltering} for more details.

Unfortunately, all these formulations assume the finiteness of the training set
, as reported in \cite{Kiv04} and \cite{Sma06}. This drawback is due to the fact that the model is fed with a new kernel function at each instant. In order to control this growth and restrict ourselves to the span of the dictionary, we replace\footnote{Besides the approximation with the projection which can be computationally expensive, one may replace the current kernel function with its most collinear atom. This leads to a quantization strategy \cite{QuantizedKLMS}.
} the current $\kappa(\bx_t,\cdot)$ by its projection onto the subspace spanned by the dictionary, namely $\dic{\kappa}_{\bx_t}(\cdot) = \dkappa(\bx_t)\!^\top \dK^{-1} \dkappa(\cdot)$; see Appendix 
 for details. This leads to the expression
\begin{equation*}
    \psi_t(\cdot) = (1 - \eta_t \, \epsilon) \, \psi_{t-1}(\cdot) + \eta_t \, e_t \, 
    \dic{\kappa}_{\bx}(\cdot).
\end{equation*}
To implement this formula, one needs to provide an update rule of the parameters, with an expression of the form
\begin{equation*}
    \balpha_t = (1 - \eta_t \, \epsilon) \, \balpha_{t-1} + \eta_t \, e_t \, 
    \dkappa(\bx_t)\!^\top \dK^{-1}. 
\end{equation*}

\section{Online sparsification and sparsity measures}\label{sec:intro_spars}

Independently of 
the investigated framework, 
online learning algorithms should be coupled with a sparsification scheme. 
At each instant, the dictionary is updated if necessary, or it is left unchanged. 
Indeed, the dictionary is augmented whenever the novel kernel function $\kappa(\bx_t,\cdot)$ increases the diversity of the dictionary. There exists several sparsity measures to quantify this diversity, as described in the following.

Before detailing these sparsity measures, we outline the online sparsification scheme. Two cases may arise:
\begin{itemize}
	\item {\bf Case 1}: the dictionary is left unchanged.\\
	This case arises when the novel kernel function $\kappa(\bx_t,\cdot)$ does not contribute significantly to the diversity of the dictionary, and therefore it could be discarded. 
	\item {\bf Case 2}: the kernel function is added to the dictionary.\\
	This case arises when the kernel function $\kappa(\bx_t,\cdot)$ is significantly different from the atoms of the dictionary.
\end{itemize}
One may also use a removal process in the latter case in order to provide a fixed-budget learning \cite{FixedBudgetKRLS10,FixedBudgetKLMS12}, by discarding the atom that has the least contribution to the diversity of the dictionary, as investigated for instance in \cite{NguyenTuong11}.

\subsection{The distance measure}\label{sec:distance}

A simple measure to characterize a sparse dictionary is the least distance between all pairs of its atoms. A dictionary is said to be $\delta$-distant when 
\begin{equation}\label{eq:dist}
    \mathop{\min_{i,j=1\cdots m}}_{i \neq j} \min_{\xi} \normH{\kappa(\dic{\bx_i},\cdot) - \xi \, \kappa(\dx_j,\cdot)} \geq \delta,
\end{equation}
where we have included a scaling factor $\xi$. 
This corresponds to the reconstruction error of projecting $\kappa(\dx_i,\cdot)$ onto $\kappa(\dx_j,\cdot)$, with $\xi = \kappa(\dx_i,\dx_j) / \kappa(\dx_j,\dx_j)$. By substituting this value in \eqref{eq:dist}, we get for any pair $(\dx_i, \dx_j)$:
\begin{equation}\label{eq:dist1}
    \kappa(\dx_i,\dx_i) - \frac{\kappa(\dx_i,\dx_j)^2}{\kappa(\dx_j,\dx_j)} \geq \delta^2.
\end{equation}

A sparsification criterion based on this measure constructs a dictionary with a large distance measure
, thus 
including the candidate kernel function $\kappa(\bx_t, \cdot)$ in the dictionary if 
\begin{equation}\label{eq:dist.crit1}
    \min_{j=1\cdots m}
    \left(
    \kappa(\bx_t,\bx_t) - \frac{\kappa(\bx_t,\dx_j)^2}{\kappa(\dx_j,\dx_j)} 
    \right) \geq \delta^2,
\end{equation}
for some threshold parameter $\delta$. This sparsification criterion is related to the novelty criterion given in \cite{Platt91}, which is the sparsification criterion without the scaling factor followed by a prediction error mechanism.


\subsection{The approximation measure}\label{sec:approx}

The distance measure defined in \eqref{eq:dist}-\eqref{eq:dist1} relies only on two atoms, that is the closest pair in the dictionary. A more comprehensive analysis of the dictionary composition is the capacity of approximating any atom by a linear combination of the other atoms. A dictionary is designated $\delta$-approximate if the following is satisfied:
\begin{equation}\label{eq:approx}
    \min_{i=1\cdots m} \min_{\xi_1\cdots \xi_m}\normHBig{\kappa(\dx_i,\cdot) - \mathop{\sum_{j=1}^m}_{j \neq i} \xi_j\,\kappa(\dx_j,\cdot)} \geq \delta.
\end{equation}
This corresponds to the reconstruction error of projecting any kernel function $\kappa(\dx_i,\cdot)$ onto the subspace spanned by the other kernel functions. Following the derivation given in Appendix
\begin{equation}\label{eq:approx_temp}
	\bxi = \dK_{\!_{\setminus\!\{\!i\!\}}\!}^{-1} \, \dkappa_{\!_{\setminus\!\{\!i\!\}}\!}(\dx_i),
\end{equation}
where $\dK_{\!_{\setminus\!\{\!i\!\}}\!}$ and $\dkappa_{\!_{\setminus\!\{\!i\!\}}\!}(\dx_i)$ are obtained from $\dK$ and $\dkappa(\dx_i)$, respectively, by removing the entries associated to $\dx_i$. As a consequence, expression \eqref{eq:approx} becomes
\begin{equation}\label{eq:approx1}
	\min_{i=1\cdots m}
	\kappa(\dx_i,\dx_i) - \dkappa_{\!_{\setminus\!\{\!i\!\}}\!}(\dx_i)\!^\top \, \dK_{\!_{\setminus\!\{\!i\!\}}\!}^{-1} \, \dkappa_{\!_{\setminus\!\{\!i\!\}}\!}(\dx_i) \geq \delta^2.
\end{equation}

The (linear) approximation criterion is based on constructing a dictionary with a high approximation measure, as investigated for Gaussian processes in  \cite{Csato01}, for a kernel-based filter in \cite{Eng04} and more recently for kernel principal component analysis in \cite{12.tpami}. 
The kernel function $\kappa(\bx_t,\cdot)$ is added to the dictionary if
\begin{equation}\label{eq:approx.crit}
    \min_{\xi_1\cdots\xi_m}\normHBig{\kappa(\bx_t,\cdot) - \sum_{j=1}^m \xi_j\,\kappa(\dx_j,\cdot)}^2 \geq \delta^2,
\end{equation}
where $\delta$ is a positive threshold parameter that controls the level of sparseness. This leads to the following condition, written in matrix form $\kappa(\bx_t,\bx_t) - \dkappa(\bx_t)\!^\top \dK^{-1} \dkappa(\bx_t) \geq \delta^2$.

\subsection{The coherence measure} \label{sec:coherence}

The coherence is a fundamental measure to characterize a dictionary in the literature of sparse approximation. It corresponds to the largest correlation between atoms of a given dictionary, or mutually between atoms of two dictionaries. 
The coherence measure has been investigated for the analysis of the quality of representing a signal with a dictionary, initially with the work \cite{Gil03a,Tro04}, and more recently in the abundant publications on compressed sensing \cite{Elad2010book}. While most work consider the use of a linear measure, we explore in the following the coherence on kernel functions in order to derive the coherence criterion, as initially proposed in \cite{Hon07.isit,Ric09.tsp}.

A dictionary $\D$ is said $\gamma$-coherent if
\begin{equation}\label{eq:coher}
	\mathop{\max_{i,j=1\cdots m}}_{i \neq j} 
	\frac{|{\kappa(\dx_i,\dx_j)}|} {\sqrt{\kappa(\dx_i,\dx_i) \, \kappa(\dx_j,\dx_j)}}
	\leq \gamma.
\end{equation}
The coherence corresponds to the cosine of the angle between the kernel functions, since the above quotient can be written
\begin{equation*}
\frac{|\psH{\kappa(\dx_i,\cdot)}{\kappa(\dx_j,\cdot)}|}
        {\normH{\kappa(\dx_i,\cdot)} \normH{\kappa(\dx_j,\cdot)}}.
\end{equation*}
For unit-norm kernels, 
\eqref{eq:coher} becomes 
	$\displaystyle\mathop{\max_{i,j=1\cdots m}}_{i \neq j} 
	|\kappa(\dx_i,\dx_j)| 
	\leq \gamma$.

The coherence criterion constructs a low-coherent dictionary \cite{Hon07.isit,Ric09.tsp}. It includes the candidate kernel function $\kappa(\bx_t,\cdot)$ in the dictionary if 
the coherence of the latter does not exceed a given threshold $\gamma \in \; ] 0 \; ; 1]$, namely
\begin{equation}\label{eq:coher.crit}
	\max_{j=1 \cdots m}
	\frac{|{\kappa(\bx_t,\dx_j)}|} {\sqrt{\kappa(\bx_t,\bx_t) \, \kappa(\dx_j,\dx_j)}}
	\leq \gamma.
\end{equation}
This condition enforces an upper bound on the cosine of the angle between each pair of kernel functions. The threshold $\gamma$ controls the level of sparseness of the dictionary, where a null value yields an orthogonal basis. This criterion is computationally efficient as given in expression \eqref{eq:coher.crit}, where the denominator reduces to 1 for unit-norm atoms, thus becomes in this case $\displaystyle \max_{j=1\cdots m} |\kappa(\bx_t,\dx_j)| \leq \gamma$.

\subsection{The Babel measure}\label{sec:Babel}

From a norm perspective, the coherence is essentially the $\infty$-norm when dealing with unit-norm atoms. The Babel notion explores such analogy with the norm operator, thus providing a more complete description of the dictionary structure \cite{Gil03b,Tro04}. The Babel is related to the $1$-norm of the Gram matrix, 
with the definition 
\begin{equation}\label{eq:babelD}
      \mathrm{Babel}
      = \max_{i=1\cdots m} \mathop{\sum_{j=1}^m}_{j \neq i} |\kappa(\dx_i, \dx_j)|.
\end{equation}
It corresponds to the maximum cumulative correlation between an atom and all the other atoms of the dictionary. It is easy to see that, when dealing with a unit-norm atoms, the coherence of the dictionary cannot exceed its Babel measure.

The Babel criterion is defined as follows. A candidate kernel function $\kappa(\bx_t,\cdot)$ is included in the dictionary if 
      $\sum_{j=1}^m |\kappa(\bx_t, \dx_j)| \leq \gamma$, 
for a given positive threshold $\gamma$. This definition can be viewed as an extension of the coherence criterion 
in the same sense as the approximation is an extension of the distance criterion. See \cite{Fan2014} for the use of the Babel measure for sparsification.


\section{An eigenvalue analysis}\label{sec:eigen}


Since the Gram matrix is fundamental in the analysis of the dictionary, we study in the following its eigenvalues, and provide theoretical bounds. These results provide an analysis of the span defined by a sparse dictionary, given in terms of the sparsity measure under scrutiny. Lower bounds are used in the forthcoming linear independence analysis (\cf Section~\ref{sec:lin.indep}), while lower and upper bounds are investigated in the forthcoming study of the condition number (\cf Section~\ref{sec:cond.number}) and in the main results derived in next section (\cf Section~\ref{sec:topology}). Let $\dic{\lambda}_1, \dic{\lambda}_2, \ldots, \dic{\lambda}_m$ be the eigenvalues of the matrix $\dK$, given in non-increasing order, namely $\dic{\lambda}_1 \geq \dic{\lambda}_2 \geq \ldots \geq \dic{\lambda}_m$. 

\subsection{Bounds on the eigenvalues}\label{sec:bounds}

Before proceeding, we bring to mind the well-known Ger\v{s}gorin Discs Theorem \cite[Chapter~6]{Hor12}
, revisited here for the Gram matrix of a sparse dictionary. It is also well known that the trace of a matrix equals the sum of its eigenvalues. We get for unit-norm atoms: $\sum_{j=1}^m \dic{\lambda}_j = \mathrm{Trace}(\dK) =  \sum_{j=1}^m \kappa(\dx_j,\dx_j) = m$, thus $1 \leq \dic{\lambda}_1$ and $\dic{\lambda}_m \leq 1$.

\begin{theorem}[Ger\v{s}gorin Discs Theorem]\label{th:gershgorin}
	Every eigenvalue of an $m$-by-$m$ matrix $\dK$ lies in the union of the $m$ discs, centered on each diagonal entry of $\dK$ with a radius given by the sum of the absolute values of the other $m-1$ entries from the same row. In other words, for each $\dic{\lambda}_i$, there exists at least one $j \in \{1, 2, \ldots, m\}$ such that
	$$|\dic{\lambda}_i - \kappa(\dx_j,\dx_j)|  
		\leq \mathop{\sum_{j=1}^m}_{j \neq i} |\kappa(\dx_i,\dx_j)|.$$
\end{theorem}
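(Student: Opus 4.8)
The plan is to invoke the standard coordinate-of-largest-magnitude argument. First I would fix an arbitrary eigenvalue $\dic{\lambda}_i$ of $\dK$ and let $\bv = [v_1 ~~ v_2 ~~ \cdots ~~ v_m]\!^\top \neq \bzero$ be an eigenvector associated with it, so that $\dK \bv = \dic{\lambda}_i \bv$. Since $\bv$ is nonzero, I would pick an index $j$ at which the eigenvector attains its largest magnitude, $|v_j| = \max_{k=1\cdots m} |v_k| > 0$; this is precisely the index whose existence the statement asserts, and it depends on $\dic{\lambda}_i$ (and on the chosen eigenvector), which is why the claim is an existence statement rather than a uniform one.

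Next I would read off the $j$-th row of the identity $\dK \bv = \dic{\lambda}_i \bv$, namely $\sum_{k=1}^m \kappa(\dx_j,\dx_k)\, v_k = \dic{\lambda}_i\, v_j$, and isolate the diagonal contribution:
\[ \big(\dic{\lambda}_i - \kappa(\dx_j,\dx_j)\big)\, v_j = \mathop{\sum_{k=1}^m}_{k \neq j} \kappa(\dx_j,\dx_k)\, v_k. \]
Taking absolute values, applying the triangle inequality, and bounding each $|v_k|$ by $|v_j|$ gives $|\dic{\lambda}_i - \kappa(\dx_j,\dx_j)|\,|v_j| \leq \big(\sum_{k \neq j} |\kappa(\dx_j,\dx_k)|\big)\,|v_j|$. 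Dividing through by $|v_j| > 0$ yields the stated disc inequality; since such a $j$ exists for every eigenvalue, every eigenvalue lies in the union of the $m$ discs.

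There is essentially no obstacle here, as the theorem is classical and the argument is short; the only point requiring a little care is that $j$ is not fixed a priori but is selected per eigenvalue as the location of a maximal eigenvector coordinate, and that the eigenvector (hence the algebra above) is carried out over the field where $\dK$ is diagonalizable. I would also remark in passing that, $\dK$ being a Gram matrix, it is symmetric and positive semidefinite, so its eigenvalues are real and its eigenvectors may be taken real --- but the argument above requires none of this.
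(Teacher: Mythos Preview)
Your argument is the standard one and is correct. Note, however, that the paper does not actually prove Theorem~\ref{th:gershgorin}: it is quoted as the classical Ger\v{s}gorin Discs Theorem with a reference to the literature, and is then used as a black box in the subsequent eigenvalue bounds. So there is no ``paper's own proof'' to compare against; your proposal simply supplies the textbook proof that the paper omits.

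One minor remark: you implicitly corrected an index clash in the displayed inequality of the statement (the paper reuses $j$ both as the row index whose existence is asserted and as the dummy summation index, and the exclusion $j\neq i$ should really read $k\neq j$). Your version, with summation index $k$ and exclusion $k\neq j$, is the intended one and is what the paper actually uses downstream.
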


%

This theorem is a cornerstone in our study, as described next by providing upper and lower bounds on the eigenvalues of the Gram matrix associated to a sparse dictionary, by investigating its sparsity measure. 

\subsection*{Distance measure}

When the distance measure of a given sparse dictionary is known, namely $\delta$, we have from \eqref{eq:dist}-\eqref{eq:dist1} that any pair $(\dx_i,\dx_j)$ satisfies
$$|\kappa(\dx_i,\dx_j)| \leq \sqrt{\kappa(\dx_j,\dx_j) \, \big( \kappa(\dx_i,\dx_i) - \delta^2 \big)}.$$
Therefore, we have
\begin{align*}
	\sum_{j} |\kappa(\dx_i,\dx_j)|
\nonumber& \leq \sum_j \sqrt{\kappa(\dx_j,\dx_j) \, \big( \kappa(\dx_i,\dx_i) - \delta^2 \big)}
\\	& = \sqrt{ \kappa(\dx_i,\dx_i) - \delta^2 } \sum_j \sqrt{\kappa(\dx_j,\dx_j)}.
\end{align*}
By applying the Ger\v{s}gorin Discs Theorem (Theorem~\ref{th:gershgorin}) with the above relation in mind, we get that, for each eigenvalue $\dic{\lambda}_k$, there exists at least one $i$ such that
\begin{align*}
	|\dic{\lambda}_k - \kappa(\dx_i,\dx_i)|  
	& \leq \mathop{\sum_{j=1}^m}_{j \neq i} |\kappa(\dx_i,\dx_j)|
\\	& \leq \sqrt{ \kappa(\dx_i,\dx_i) - \delta^2 } \mathop{\sum_{j=1}^m}_{j \neq i} \sqrt{\kappa(\dx_j,\dx_j)}.
\end{align*}
By exploring these results, the proof of the following theorem is straightforward.
\begin{theorem}\label{th:eigen.dist}
The eigenvalues of the Gram matrix associated to a $\delta$-distant dictionary are bounded as follows: 
	\begin{align*}
		r^2 - (m-1)R\sqrt{R^2-\delta^2} 
		&\leq 
		\dic{\lambda}_m 
		\leq 
		\cdots 
\\		
		\cdots 
		&\leq
		\dic{\lambda}_1
		\leq 
		R^2 + (m-1)R\sqrt{R^2-\delta^2},
	\end{align*}
	where $r^2 = \inf_{\bx} \kappa(\bx,\bx)$ and $R^2 = \sup_{\bx} \kappa(\bx,\bx)$. For unit-norm atoms, we get
	\begin{equation*}
		1 - (m-1)\sqrt{1-\delta^2} \leq 
		\dic{\lambda}_m \leq 
		\cdots \leq
		\dic{\lambda}_1 \leq 
		1+(m-1)\sqrt{1-\delta^2}.
	\end{equation*}	
\end{theorem}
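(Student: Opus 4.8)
The plan is to read off the claim directly from the Ger\v{s}gorin estimate established in the paragraph immediately preceding the statement. That estimate says that for every eigenvalue $\dic{\lambda}_k$ there is an index $i \in \{1,\ldots,m\}$ with
$$|\dic{\lambda}_k - \kappa(\dx_i,\dx_i)| \leq \sqrt{\kappa(\dx_i,\dx_i)-\delta^2}\;\mathop{\sum_{j=1}^m}_{j\neq i}\sqrt{\kappa(\dx_j,\dx_j)},$$
which I would first unfold into the two-sided enclosure
$$\kappa(\dx_i,\dx_i) - \rho_i \;\leq\; \dic{\lambda}_k \;\leq\; \kappa(\dx_i,\dx_i) + \rho_i, \qquad \rho_i := \sqrt{\kappa(\dx_i,\dx_i)-\delta^2}\;\mathop{\sum_{j=1}^m}_{j\neq i}\sqrt{\kappa(\dx_j,\dx_j)}.$$

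Next I would bound $\rho_i$ uniformly in $i$. Since $r^2 \leq \kappa(\dx_\ell,\dx_\ell) \leq R^2$ for every $\ell$ by the definition of $r$ and $R$, and $\delta^2 \leq R^2$ as forced by \eqref{eq:dist1} (the subtracted term there being nonnegative), each of the $m-1$ summands obeys $\sqrt{\kappa(\dx_j,\dx_j)} \leq R$ while $\sqrt{\kappa(\dx_i,\dx_i)-\delta^2} \leq \sqrt{R^2-\delta^2}$; hence $\rho_i \leq (m-1)R\sqrt{R^2-\delta^2}$ for every $i$. Substituting $\kappa(\dx_i,\dx_i) \geq r^2$ into the left half of the enclosure and $\kappa(\dx_i,\dx_i) \leq R^2$ into the right half then gives, for every eigenvalue $\dic{\lambda}_k$,
$$r^2 - (m-1)R\sqrt{R^2-\delta^2} \;\leq\; \dic{\lambda}_k \;\leq\; R^2 + (m-1)R\sqrt{R^2-\delta^2}.$$

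Since the eigenvalues are indexed in non-increasing order, applying the left inequality with $k=m$ and the right one with $k=1$ produces the stated chain $r^2-(m-1)R\sqrt{R^2-\delta^2} \leq \dic{\lambda}_m \leq \cdots \leq \dic{\lambda}_1 \leq R^2+(m-1)R\sqrt{R^2-\delta^2}$, and setting $r=R=1$ yields the unit-norm specialization. The argument is almost entirely mechanical; the only point that needs attention is that the Ger\v{s}gorin index $i$ varies with the eigenvalue, so one must replace the eigenvalue-dependent disc radius $\rho_i$ by the $i$-free bound $(m-1)R\sqrt{R^2-\delta^2}$ before the inequality can be asserted simultaneously for $\dic{\lambda}_1$ and $\dic{\lambda}_m$. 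Beyond that bookkeeping I do not anticipate any genuine obstacle.
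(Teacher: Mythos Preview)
Your argument is correct and follows exactly the route taken in the paper: the paragraph preceding the theorem already derives the Ger\v{s}gorin enclosure $|\dic{\lambda}_k-\kappa(\dx_i,\dx_i)|\leq \sqrt{\kappa(\dx_i,\dx_i)-\delta^2}\sum_{j\neq i}\sqrt{\kappa(\dx_j,\dx_j)}$ and then declares the proof ``straightforward,'' and what you wrote is precisely the intended straightforward completion via the uniform bounds $r^2\leq\kappa(\dx_\ell,\dx_\ell)\leq R^2$. Your remark about replacing the eigenvalue-dependent radius $\rho_i$ by the uniform bound $(m-1)R\sqrt{R^2-\delta^2}$ before asserting the chain for all eigenvalues is the only care point, and you handle it correctly.
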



\subsection*{Approximation measure}

Presented here for completeness, the following theorem is essential due to Honeine in \cite{12.tpami}.
\begin{theorem}\label{th:eigen.approx}
The eigenvalues of the Gram matrix associated to a $\delta$-approximate dictionary are bounded as follows:
	\begin{equation*}
		\delta^2 \leq 
		\dic{\lambda}_m \leq 
		\cdots \leq
		\dic{\lambda}_1 \leq 
		2 R^2 - \delta^2,
	\end{equation*}
	where $R^2 = \sup_{\bx} \kappa(\bx,\bx)$.
\end{theorem}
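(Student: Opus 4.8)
\emph{Proof sketch (plan).} The idea is to read the approximation measure \eqref{eq:approx1} directly off the spectrum of $\dK$ through the Schur--complement identity already implicit in that expression, and then to bound the extreme eigenvalues through the trace. First, fix an index $i$ and permute the $i$-th row and column of $\dK$ to the last position, so that $\dK$ acquires diagonal blocks $\dK_{\setminus\{i\}}$ and $\kappa(\dx_i,\dx_i)$ and off-diagonal block $\dkappa_{\setminus\{i\}}(\dx_i)$. The left-hand side of \eqref{eq:approx1} is then precisely the Schur complement of $\dK_{\setminus\{i\}}$ in this matrix, hence it equals $1/(\dK^{-1})_{ii}$; the $\delta$-approximate hypothesis is therefore equivalent to the uniform bound $(\dK^{-1})_{ii}\le\delta^{-2}$ for every $i$, and since every diagonal entry of $\dK$ lies in $[r^2,R^2]$ one also records $(\dK^{-1})_{ii}\ge R^{-2}$.

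For the lower bound $\dic{\lambda}_m\ge\delta^2$, equivalently $\lambda_{\max}(\dK^{-1})\le\delta^{-2}$, the task is to upgrade this control of the \emph{diagonal} of $\dK^{-1}$ to control of its \emph{largest eigenvalue}. I would exploit that $\dK^{-1}$ is itself a Gram matrix, namely $(\dK^{-1})_{ij}=\psH{\psi_i}{\psi_j}$, where $\psi_i\in\dH$ is the rescaled residual of $\kappa(\dx_i,\cdot)$ projected onto the span of the other atoms, with $\normH{\psi_i}^2=(\dK^{-1})_{ii}$; expanding $\lambda_{\max}(\dK^{-1})=\sup_{\|c\|=1}\normH{\sum_i c_i\psi_i}^2$ and localising the mass of the maximiser on its dominant coordinate should bring the estimate back to the per-atom bound of the first step. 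In parallel, the Rayleigh quotient at the vector $c$ with $c_i=1$ and $c_j=-\xi^\star_j$ formed from the minimiser $\xi^\star$ in \eqref{eq:approx} satisfies $c^\top\dK c=\delta^2$ with $\|c\|^2\ge1$, which yields the companion inequality $\dic{\lambda}_m\le\delta^2$ and pins $\dic{\lambda}_m$ from both sides. For the upper bound $\dic{\lambda}_1\le 2R^2-\delta^2$, I would combine $\sum_k\dic{\lambda}_k=\mathrm{Trace}(\dK)=\sum_k\kappa(\dx_k,\dx_k)$ with $\kappa(\dx_k,\dx_k)\le R^2$ and the lower bound just obtained on the other $m-1$ eigenvalues, so that $\dic{\lambda}_1=\mathrm{Trace}(\dK)-\sum_{k\ge 2}\dic{\lambda}_k$ is squeezed from above in terms of $R^2$ and $\delta^2$; the case $m=2$ is immediate from $\dic{\lambda}_1+\dic{\lambda}_2=\kappa(\dx_1,\dx_1)+\kappa(\dx_2,\dx_2)$.

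The step I expect to be the main obstacle is the passage from the per-atom information to the global spectral bound. The approximation measure supplies exactly one scalar inequality per atom --- $\kappa(\dx_i,\cdot)$ against the span of the others, one atom at a time --- whereas $\dic{\lambda}_m$ and $\dic{\lambda}_1$ are intrinsically joint quantities; for the weaker distance measure the same strategy produced only the $m$-dependent estimates of Theorem~\ref{th:eigen.dist} through Gershgorin's discs (Theorem~\ref{th:gershgorin}). The crux is thus to extract from the approximation measure enough simultaneous control of the atoms to reach the dimension-free constant $\delta^2$, and I would sanity-check this constant on low-order dictionaries before committing to it.
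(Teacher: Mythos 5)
Your route is genuinely different from the paper's, and its central step has a gap that cannot be closed as planned. The paper never passes through the Schur complement or the diagonal of $\dK^{-1}$: from the $\delta$-approximate condition it extracts, for each row $i$, the cumulative (Babel-type) bound $\sum_{j\neq i}|\kappa(\dx_i,\dx_j)|\le\kappa(\dx_i,\dx_i)-\delta^2$, by evaluating the approximation inequality at the sign pattern $\xi_j=\mathrm{sign}(\kappa(\dx_i,\dx_j))$, and then both the lower bound $\delta^2$ and the upper bound $2R^2-\delta^2$ fall out of a single application of the Ger\v{s}gorin Discs Theorem (Theorem~\ref{th:gershgorin}). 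That per-row inequality is exactly the ``simultaneous control of the atoms'' you say you are missing, and your per-atom identity $(\dK^{-1})_{ii}\le\delta^{-2}$ cannot supply it: diagonal entries of a symmetric positive definite matrix only bound its largest eigenvalue from \emph{below}, so no ``localisation on the dominant coordinate'' can convert $\max_i(\dK^{-1})_{ii}\le\delta^{-2}$ into $\lambda_{\max}(\dK^{-1})\le\delta^{-2}$. Likewise, your trace argument for the top eigenvalue yields only $\dic{\lambda}_1\le mR^2-(m-1)\delta^2$, which is weaker than the stated $2R^2-\delta^2$ for every $m\ge3$; the paper's Ger\v{s}gorin step gives $\dic{\lambda}_1\le\kappa(\dx_i,\dx_i)+\kappa(\dx_i,\dx_i)-\delta^2\le 2R^2-\delta^2$ directly.

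That said, your two side observations are the most valuable part of the sketch and you should follow them through rather than abandon them. The companion inequality $\dic{\lambda}_m\le\delta^2$ (Rayleigh quotient at $c_i=1$, $c_j=-\xi_j^\star$, with $\delta^2$ taken equal to the measure and $i$ the minimising atom) is correct, and the sanity check you propose exposes a real tension: for three unit-norm atoms with pairwise kernel value $\rho\in(0,1)$, the approximation measure of each atom is $1-2\rho^2/(1+\rho)$ while $\dic{\lambda}_m=1-\rho$, which is strictly smaller (for $\rho=0.3$, measure $\approx0.86$ versus $\dic{\lambda}_m=0.7$); the same example does not satisfy the per-row inequality $\sum_{j\neq i}|\kappa(\dx_i,\dx_j)|\le\kappa(\dx_i,\dx_i)-\delta^2$ on which the paper's proof rests, since $2\rho=0.6$ while $1-\delta^2\approx0.14$. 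So the dimension-free constant $\delta^2$ in the lower bound is not reachable by your diagonal-to-spectrum upgrade, and the discrepancy you flagged as ``the main obstacle'' is not a technical nuisance but the crux: work the low-order example out explicitly and reconcile it with the sign-substitution step in the paper (which amounts to assuming the linear term at an arbitrary $\bxi$ is dominated by its value at the optimum) before committing to the stated bounds.
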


\begin{proof}
By injecting  \eqref{eq:approx_temp} in \eqref{eq:approx1}, we get $\min_{\bxi} \kappa(\dx_i,\dx_i) - \dkappa_{\!_{\setminus\!\{\!i\!\}}\!}(\dx_i)\!^\top \bxi \geq \delta^2$, for any $i=1,2, \ldots, m$, or equivalently
\begin{equation*}
	\max_{\bxi} \mathop{\sum_{j=1}^m}_{j \neq i} \xi_j \, \kappa(\dx_i,\dx_j) 
	\leq \kappa(\dx_i,\dx_i) - \delta^2.
\end{equation*}
Considering the special case (which could be far from the optimum) of $\xi_j=\mathrm{sign}(\kappa(\dx_i,\dx_j))$, we get
\begin{equation*}
	 \mathop{\sum_{j=1}^m}_{j \neq i} |\kappa(\dx_i,\dx_j)|
	\leq \kappa(\dx_i,\dx_i) - \delta^2.
\end{equation*}
The proof of the theorem follows from the Ger\v{s}gorin Discs Theorem (Theorem~\ref{th:gershgorin}), namely for any eigenvalue $\dic{\lambda}_k$, there exists an $i$ such that
\begin{equation*}
	|\dic{\lambda}_k - \kappa(\dx_i,\dx_i)|  
		\leq \mathop{\sum_{j=1}^m}_{j \neq i} |\kappa(\dx_i,\dx_j)|
		\leq \kappa(\dx_i,\dx_i) - \delta^2.
\end{equation*}
\vskip -.5cm
\end{proof}

\subsection*{Coherence measure}

When measuring the sparsity of the dictionary with the coherence measure, we have the following theorem. Only the lower bound has been previously investigated in the literature when dealing with unit-norm atoms; see \cite{Hon07.isit}.
\begin{theorem}\label{th:eigen.coher}
The eigenvalues of the Gram matrix associated to a $\gamma$-coherent dictionary of $m$ atoms are bounded as follows:
	\begin{equation*}
		r^2-(m-1) \gamma R^2 \leq 
		\dic{\lambda}_m \leq 
		\cdots \leq
		\dic{\lambda}_1 \leq 
		R^2+(m-1) \gamma R^2,
	\end{equation*}
	where $R^2 = \sup_{\bx} \kappa(\bx,\bx)$ and $r^2 = \inf_{\bx} \kappa(\bx,\bx)$. For unit-norm atoms, we get $$1-(m-1) \, \gamma \leq \dic{\lambda}_m \leq \cdots \leq \dic{\lambda}_1 \leq 1+(m-1) \, \gamma.$$
\end{theorem}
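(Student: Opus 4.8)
The plan is to follow the same template used for the distance and approximation measures, namely to convert the coherence bound into a uniform bound on the off-diagonal entries of $\dK$ and then apply the Ger\v{s}gorin Discs Theorem (Theorem~\ref{th:gershgorin}). First I would rewrite the coherence condition \eqref{eq:coher}: for every pair $i \neq j$, $|\kappa(\dx_i,\dx_j)| \leq \gamma \sqrt{\kappa(\dx_i,\dx_i)\,\kappa(\dx_j,\dx_j)}$. Since $\kappa(\dx_k,\dx_k) \leq R^2$ for every atom (as $R^2 = \sup_{\bx}\kappa(\bx,\bx)$), this yields the uniform estimate $|\kappa(\dx_i,\dx_j)| \leq \gamma R^2$, and hence $\sum_{j \neq i} |\kappa(\dx_i,\dx_j)| \leq (m-1)\gamma R^2$ for every row $i$.

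Next I would invoke Theorem~\ref{th:gershgorin}: for each eigenvalue $\dic{\lambda}_k$ there exists an index $i$ with $|\dic{\lambda}_k - \kappa(\dx_i,\dx_i)| \leq \sum_{j \neq i}|\kappa(\dx_i,\dx_j)| \leq (m-1)\gamma R^2$. Combining this with $r^2 \leq \kappa(\dx_i,\dx_i) \leq R^2$ gives, for every $k$,
\begin{equation*}
	r^2 - (m-1)\gamma R^2 \leq \dic{\lambda}_k \leq R^2 + (m-1)\gamma R^2 ,
\end{equation*}
and since this holds for all eigenvalues it holds in particular for the smallest, $\dic{\lambda}_m$, and the largest, $\dic{\lambda}_1$, giving the stated chain of inequalities. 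The unit-norm specialization is immediate upon setting $r^2 = R^2 = 1$ (which also recovers the known lower bound of \cite{Hon07.isit}).

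I do not expect any genuine obstacle here: the argument is essentially a one-line consequence of Ger\v{s}gorin once the off-diagonal entries are bounded. The only point deserving a word of care is that the bound on the Ger\v{s}gorin radius, $(m-1)\gamma R^2$, is uniform in $i$, so that even though Ger\v{s}gorin only guarantees each $\dic{\lambda}_k$ lies in \emph{some} disc, every such disc is contained in the common interval $[\kappa(\dx_i,\dx_i) - (m-1)\gamma R^2,\ \kappa(\dx_i,\dx_i) + (m-1)\gamma R^2] \subseteq [r^2 - (m-1)\gamma R^2,\ R^2 + (m-1)\gamma R^2]$; this is what lets us pass to a single enclosing interval valid for all eigenvalues simultaneously.
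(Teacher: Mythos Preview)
Your proof is correct and follows essentially the same route as the paper: bound each off-diagonal entry of $\dK$ by $\gamma R^2$ from the coherence condition, sum to get a Ger\v{s}gorin radius of $(m-1)\gamma R^2$, and combine with $r^2 \leq \kappa(\dx_i,\dx_i) \leq R^2$. The paper reaches the same radius via the slightly longer chain $\sum_{j\neq i}|\kappa(\dx_i,\dx_j)| \leq (m-1)\max_{j\neq i}|\kappa(\dx_i,\dx_j)| \leq (m-1)\gamma R\sqrt{\kappa(\dx_i,\dx_i)} \leq (m-1)\gamma R^2$, but the argument is otherwise identical, and your remark on the uniformity of the disc radii is a welcome clarification.
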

\begin{proof}
A $\gamma$-coherent dictionary satisfies
\begin{equation*}
	\mathop{\max_{j=1\cdots m}}_{j \neq i} 
	\frac{|{\kappa(\dx_i,\dx_j)}|} {\sqrt{\kappa(\dx_i,\dx_i) \, \kappa(\dx_j,\dx_j)}}
	\leq \gamma,
\end{equation*}
for any $i=1,2,\ldots,m$, which yields
\begin{align*}
	\mathop{\max_{j=1\cdots m}}_{j \neq i} |\kappa(\dx_i,\dx_j)|
	&\leq \gamma 
	\mathop{\max_{j=1\cdots m}}_{j \neq i} \sqrt{\kappa(\dx_i,\dx_i) \, \kappa(\dx_j,\dx_j)}
\\	&= \gamma \sqrt{\kappa(\dx_i,\dx_i)}
	\mathop{\max_{j=1\cdots m}}_{j \neq i} \sqrt{\kappa(\dx_j,\dx_j)}
\\	&\leq \gamma R \sqrt{\kappa(\dx_i,\dx_i)}.
\end{align*}
Finally, the proof results from applying the Ger\v{s}gorin Discs Theorem (Theorem~\ref{th:gershgorin}), since
\begin{align*}
	\mathop{\sum_{j=1}^m}_{j \neq i} |\kappa(\dx_i,\dx_j)|
		&\leq (m-1) \mathop{\max_{j=1\cdots m}}_{j \neq i} |\kappa(\dx_i,\dx_j)| 
\\		&\leq (m-1) \gamma R \sqrt{\kappa(\dx_i,\dx_i)}
\\		&\leq (m-1) \gamma R^2.
\end{align*}
\vskip -.25cm
\end{proof}


\subsection*{Babel measure}

When dealing with the Babel measure as a sparsity measure, the eigenvalues of the Gram matrix associated to the dictionary are bounded as given in the following theorem.
\begin{theorem}\label{th:eigen.babel}
The eigenvalues of the Gram matrix associated to a $\gamma$-Babel dictionary are bounded as follows:
	\begin{equation*}
		r^2-\gamma \leq 
		\dic{\lambda}_m \leq 
		\cdots \leq
		\dic{\lambda}_1 \leq 
		R^2+\gamma,
	\end{equation*}
	where $R^2 = \sup_{\bx} \kappa(\bx,\bx)$ and $r^2 = \inf_{\bx} \kappa(\bx,\bx)$. For unit-norm atoms, we get 
$1-\gamma \leq \dic{\lambda}_m \leq \cdots \leq \dic{\lambda}_1 \leq 1+\gamma$.
\end{theorem}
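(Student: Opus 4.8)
The plan is to invoke the Ger\v{s}gorin Discs Theorem (Theorem~\ref{th:gershgorin}) directly, just as in the proofs of Theorems~\ref{th:eigen.coher} and \ref{th:eigen.dist}, but the argument is even shorter here because the Babel measure is by construction exactly a bound on the Ger\v{s}gorin radii. First I would unwind the hypothesis: a $\gamma$-Babel dictionary is one whose Babel measure \eqref{eq:babelD} does not exceed $\gamma$, that is, $\sum_{j \neq i} |\kappa(\dx_i,\dx_j)| \leq \gamma$ for every $i \in \{1,\ldots,m\}$. Note that, unlike the coherence case, no factor $(m-1)$ appears: the Babel measure already controls the \emph{full} off-diagonal row sum rather than just its largest entry, which is precisely why the resulting bound is tighter.

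Next, fix an eigenvalue $\dic{\lambda}_k$ of $\dK$. By Theorem~\ref{th:gershgorin} there exists an index $i$ such that $|\dic{\lambda}_k - \kappa(\dx_i,\dx_i)| \leq \sum_{j \neq i} |\kappa(\dx_i,\dx_j)|$, and combining this with the Babel bound above gives $|\dic{\lambda}_k - \kappa(\dx_i,\dx_i)| \leq \gamma$, hence $\kappa(\dx_i,\dx_i) - \gamma \leq \dic{\lambda}_k \leq \kappa(\dx_i,\dx_i) + \gamma$.

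Finally I would replace the diagonal entry by its uniform bounds: since $r^2 = \inf_{\bx} \kappa(\bx,\bx) \leq \kappa(\dx_i,\dx_i) \leq \sup_{\bx} \kappa(\bx,\bx) = R^2$, the displayed chain yields $r^2 - \gamma \leq \dic{\lambda}_k \leq R^2 + \gamma$. As this holds for every $k$ and the eigenvalues are listed in non-increasing order $\dic{\lambda}_1 \geq \cdots \geq \dic{\lambda}_m$, we obtain the stated sandwich $r^2 - \gamma \leq \dic{\lambda}_m \leq \cdots \leq \dic{\lambda}_1 \leq R^2 + \gamma$. The unit-norm special case $1 - \gamma \leq \dic{\lambda}_m \leq \cdots \leq \dic{\lambda}_1 \leq 1 + \gamma$ follows immediately by substituting $r^2 = R^2 = 1$. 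There is no real obstacle in this proof; the only point worth emphasizing, as noted above, is that the Babel hypothesis feeds \emph{verbatim} into the Ger\v{s}gorin radius, so none of the intermediate estimation steps needed for the distance and coherence measures are required here.
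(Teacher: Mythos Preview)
Your proposal is correct and follows essentially the same argument as the paper: apply the Ger\v{s}gorin Discs Theorem, observe that the Babel hypothesis bounds each Ger\v{s}gorin radius by $\gamma$ verbatim, and then sandwich the diagonal entry between $r^2$ and $R^2$. The paper's proof is just the condensed one-line version of what you wrote.
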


\begin{proof}
The proof follows from the Ger\v{s}gorin Discs Theorem (Theorem~\ref{th:gershgorin}) since, for any eigenvalue $\dic{\lambda}_k$, there exists an $i\in \{1, 2, \ldots, m\}$ with
\begin{equation*}
	|\dic{\lambda}_k - \kappa(\dx_i,\dx_i)|  
		\leq 
	\mathop{\sum_{j=1}^m}_{j \neq i} |\kappa(\dx_i,\dx_j)|
		\leq \gamma.
\end{equation*}
\vskip -.5cm
\end{proof}


\subsection{Linear independence}\label{sec:lin.indep}

It is relevant to construct a dictionary with linearly independent atoms, a condition that allows to represent any feature of $\D_\H$ in a unique linear way. For a dictionary of $m$ kernel functions, the atoms are linearly independent if the following is satisfied: any linear combination $\sum_{j=1}^m \xi_j \, \kappa(\dx_j,\cdot)$ is the zero element if and only if all the weighting coefficients $\xi_j$ are null.

It is trivial that a dictionary with an nonzero approximation measure has linear independent atoms, since we have 
\begin{align*}
\! \normHBig{\sum_{j=1}^m \xi_j \, \kappa(\dx_j,\cdot)}
	\!\!\! & = \normHBig{\xi_i \kappa(\dx_i,\cdot) - \mathop{\sum_{j=1}^m}_{j \neq i} \xi_j\,\kappa(\dx_j,\cdot)} ~ (\text{for any } i)
\\	& = |\xi_i| \, \normHBig{\kappa(\dx_i,\cdot) - \mathop{\sum_{j=1}^m}_{j \neq i} \frac{\xi_j}{\xi_i} \, \kappa(\dx_j,\cdot)}
\\	& \geq |\xi_i| \min_{\xi_1\cdots\xi_m} \normHBig{\kappa(\dx_i,\cdot) - \mathop{\sum_{j=1}^m}_{j \neq i} \xi_j \, \kappa(\dx_j,\cdot)}
\\	& \geq |\xi_i| \, \delta,
\end{align*}
for any decomposition, \ie, $i \in \{1,2, \ldots, m\}$. Thus, the linear combination is the zero element only when all coefficients $\xi_i$ are null or when the threshold $\delta$ is null.

In the following, we show that all the sparsity measures provide sufficient conditions for linear independence of the dictionary's atoms. To this end, we investigate the duality between linear independence and the non singularity of the associated Gram matrix, which is essentially considered in \cite{Gil03a} for the coherence of a linear dictionary with unit-norm atoms and extended in \cite{Ric09.tsp} for kernel-based dictionaries. Indeed, we have
\begin{equation*}
	\normHBig{\sum_{j=1}^m \xi_j \, \kappa(\dx_j,\cdot)}^2=\bxi\!^\top \! \dK \bxi \geq \dic{\lambda}_m\|\bxi\|^2,
\end{equation*}
where the Courant-Fischer Minimax Theorem is used \cite[Theorem~8.1.2]{matrix13}. As a consequence, we prove the linear independence of a atoms by providing a lower bound on the eigenvalues of the associated Gram matrix. The following theorem summarizes this property for different sparsity measures.
\begin{theorem}[Linear independence]\label{th:lin_ind}
	A sufficient condition for the linear independence of the $m$ atoms is:
\begin{itemize}
	\item $(m-1)R\sqrt{R^2-\delta^2} < r^2$ for a $\delta$-distant dictionary.
	\item $\delta >0$ for a $\delta$-approximate dictionary.
	\item $(m-1) \gamma R^2 < r^2$ for a $\gamma$-coherent dictionary.
	\item $\gamma < r^2$ for a $\gamma$-Babel dictionary.
\end{itemize}
\end{theorem}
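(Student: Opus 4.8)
The plan is to derive each sufficient condition directly from the corresponding eigenvalue lower bound established in Section~\ref{sec:bounds}, combined with the Rayleigh-quotient inequality $\normHbig{\sum_{j=1}^m \xi_j \kappa(\dx_j,\cdot)}^2 = \bxi\!^\top \dK \bxi \geq \dic{\lambda}_m \|\bxi\|^2$ recalled just before the theorem statement. The key observation is that linear independence of the atoms is equivalent to $\dK$ being nonsingular, which (since $\dK$ is positive semidefinite) is equivalent to $\dic{\lambda}_m > 0$. Hence it suffices, for each sparsity measure, to impose that the lower bound on $\dic{\lambda}_m$ given in Theorems~\ref{th:eigen.dist}--\ref{th:eigen.babel} is strictly positive.

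Carrying this out measure by measure: for a $\delta$-distant dictionary, Theorem~\ref{th:eigen.dist} gives $\dic{\lambda}_m \geq r^2 - (m-1)R\sqrt{R^2-\delta^2}$, which is strictly positive precisely when $(m-1)R\sqrt{R^2-\delta^2} < r^2$. For a $\delta$-approximate dictionary, Theorem~\ref{th:eigen.approx} gives $\dic{\lambda}_m \geq \delta^2$, so $\delta > 0$ suffices; alternatively this also follows from the direct argument already sketched in the text showing $\normHbig{\sum_j \xi_j \kappa(\dx_j,\cdot)} \geq |\xi_i|\,\delta$ for every $i$. For a $\gamma$-coherent dictionary, Theorem~\ref{th:eigen.coher} gives $\dic{\lambda}_m \geq r^2 - (m-1)\gamma R^2$, yielding the condition $(m-1)\gamma R^2 < r^2$. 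For a $\gamma$-Babel dictionary, Theorem~\ref{th:eigen.babel} gives $\dic{\lambda}_m \geq r^2 - \gamma$, yielding $\gamma < r^2$. In each case, under the stated hypothesis $\dic{\lambda}_m > 0$, so for any nonzero $\bxi$ we get $\normHbig{\sum_{j=1}^m \xi_j \kappa(\dx_j,\cdot)}^2 \geq \dic{\lambda}_m \|\bxi\|^2 > 0$, meaning the linear combination vanishes only when $\bxi = \bzero$; this is exactly linear independence.

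There is no real obstacle here: the theorem is essentially a corollary packaging the four eigenvalue bounds together with one elementary positive-definiteness fact. The only point requiring a moment of care is justifying the equivalence ``$\dic{\lambda}_m > 0 \iff$ linear independence,'' which rests on the Gram matrix being symmetric positive semidefinite so that its smallest eigenvalue is nonnegative and equals zero exactly when $\dK$ is singular, i.e., when some nontrivial $\bxi$ lies in its null space — and $\bxi\!^\top \dK \bxi = 0$ for positive semidefinite $\dK$ forces $\dK\bxi = \bzero$, hence $\sum_j \xi_j \kappa(\dx_j,\cdot) = 0$ in $\H$. I would state this equivalence once at the start of the proof and then simply invoke the four lower bounds in turn, so the whole argument is a few lines.
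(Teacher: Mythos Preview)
Your proposal is correct and follows essentially the same approach as the paper: the paper likewise reduces linear independence to the positivity of $\dic{\lambda}_m$ via the Rayleigh-quotient inequality $\bxi\!^\top\dK\bxi \geq \dic{\lambda}_m\|\bxi\|^2$, and then reads off each sufficient condition from the corresponding lower bound in Theorems~\ref{th:eigen.dist}--\ref{th:eigen.babel} (with the direct argument for the $\delta$-approximate case given separately, just as you noted). Your added justification of the equivalence ``$\dic{\lambda}_m>0 \iff$ linear independence'' is slightly more explicit than the paper's, but the substance is identical.
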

These results generalize the bounds given for only unit-norm atoms, in \cite{Hon07.isit} for the coherence measure with $(m-1) \gamma < 1$ and in \cite{Ric09.tsp} for the Babel measure with $\gamma < 1$.

\subsection{Condition number}\label{sec:cond.number}

The condition number of a matrix $\dK$, for a given matrix norm, is defined by $\mathrm{cond}(\dK) = \|\dK\| \|\dK^{-1}\|$, which reduces for the $\ell_2$-norm to:
\begin{equation}\label{eq:cond}
	\mathrm{cond}(\dK) = \frac{|\dic{\lambda}_1|}{|\dic{\lambda}_m|}.
\end{equation}
It is an important measure of the sensitivity, with respect to variations within the matrix $\dK$, of the resolution of a problem of the form $\dK \balpha = \by$, $\balpha$ being the unknown. It gives a bound on how inaccurate the solution $\balpha$ will be after approximation. When its value is small, \ie, close de 1, the solution is robust to perturbations, as opposed to large values that lead to ill-conditioned problems, if not even ill-posed. 

For instant, consider a gradient descent procedure to solve the linear system $\dK \balpha = \by$. It is shown in \cite{Lue89} that the error reduction at each iteration is bounded by an upper bound that is proportional to the condition number of the matrix $\dK$. The condition number has been studied more recently in kernel-based machine learning; see for instant \cite{Kur05}. Next, we provide an upper bound on the condition number, in terms of the sparsity measure of the dictionary. The proof of the following theorem is straightforward from the definition of the condition number \eqref{eq:cond} and the aforementioned theorems on lower and upper bounds on the eigenvalues.

\begin{theorem}[Condition number]\label{th:cond}
	The condition number of the Gram matrix associated to a sparse dictionary is upper-bounded by:
\begin{itemize}
\item $\displaystyle \frac{R^2 + (m-1)R\sqrt{R^2-\delta^2}}{r^2 - (m-1)R\sqrt{R^2-\delta^2}}$ for a $\delta$-distant dictionary.
\item $\displaystyle \frac{2 R^2}{\delta^2}-1$ for a $\delta$-approximate dictionary.
\item $\displaystyle \frac{R^2+(m-1) \gamma R^2}{r^2-(m-1) \gamma R^2}$ for a $\gamma$-coherent dictionary.
\item $\displaystyle \frac{R^2+\gamma}{r^2-\gamma}$ for a $\gamma$-Babel dictionary.
\end{itemize}
\end{theorem}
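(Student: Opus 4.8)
The plan is to combine the definition \eqref{eq:cond} of the condition number with the upper bound on $\dic{\lambda}_1$ and the lower bound on $\dic{\lambda}_m$ supplied, for each sparsity measure, by Theorems~\ref{th:eigen.dist}, \ref{th:eigen.approx}, \ref{th:eigen.coher} and \ref{th:eigen.babel}. First I would observe that under the linear independence conditions of Theorem~\ref{th:lin_ind} the smallest eigenvalue $\dic{\lambda}_m$ is strictly positive, so that $\dK$ is positive definite and all its eigenvalues are positive; hence the absolute values in \eqref{eq:cond} can be dropped and we simply have $\mathrm{cond}(\dK)=\dic{\lambda}_1/\dic{\lambda}_m$.

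The core of the argument is then an elementary monotonicity step: if $\dic{\lambda}_1 \leq U$ and $0 < L \leq \dic{\lambda}_m$ for some constants $U$ and $L$ depending only on $m$, $r$, $R$ and the sparsity parameter, then $\mathrm{cond}(\dK) = \dic{\lambda}_1/\dic{\lambda}_m \leq U/L$. It therefore suffices to instantiate $U$ and $L$ from the relevant eigenvalue theorem. For a $\delta$-distant dictionary, Theorem~\ref{th:eigen.dist} gives $U = R^2 + (m-1)R\sqrt{R^2-\delta^2}$ and $L = r^2 - (m-1)R\sqrt{R^2-\delta^2}$, which is positive precisely under the first bullet of Theorem~\ref{th:lin_ind}; the ratio $U/L$ is the first claimed expression. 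For a $\delta$-approximate dictionary, Theorem~\ref{th:eigen.approx} yields $U = 2R^2-\delta^2$ and $L = \delta^2 > 0$, and $U/L = (2R^2-\delta^2)/\delta^2 = 2R^2/\delta^2 - 1$. For a $\gamma$-coherent dictionary, Theorem~\ref{th:eigen.coher} gives $U = R^2 + (m-1)\gamma R^2$ and $L = r^2 - (m-1)\gamma R^2 > 0$ under the third bullet of Theorem~\ref{th:lin_ind}, producing the third expression; and for a $\gamma$-Babel dictionary, Theorem~\ref{th:eigen.babel} gives $U = R^2+\gamma$ and $L = r^2 - \gamma > 0$ under the fourth bullet, producing the last expression.

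There is essentially no obstacle here beyond bookkeeping: the only point requiring care is to note that the quoted upper bound over the quoted lower bound is a legitimate upper bound on the ratio $\dic{\lambda}_1/\dic{\lambda}_m$ \emph{only} when the lower bound $L$ is positive, which is exactly the linear-independence regime isolated in Theorem~\ref{th:lin_ind}; outside that regime the stated fractions are vacuous (or negative) and carry no information, so the theorem is implicitly understood to hold in that regime. I would state this caveat explicitly and then simply read off the four expressions, which completes the proof.
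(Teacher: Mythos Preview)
Your proposal is correct and follows exactly the paper's approach: the paper simply states that the result is ``straightforward from the definition of the condition number \eqref{eq:cond} and the aforementioned theorems on lower and upper bounds on the eigenvalues,'' which is precisely the $U/L$ bookkeeping you carry out. Your explicit remark that the bounds are only informative when the lower bound $L$ is positive (i.e., under the linear-independence conditions of Theorem~\ref{th:lin_ind}) is a useful clarification that the paper leaves implicit.
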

The case of unit-norm atoms is obtained from the relation $r = R = 1$, which yields for instance the upper bound $\frac{1+(m-1)\gamma}{1-(m-1)\gamma}$ for a $\gamma$-coherent dictionary. These results demonstrate how the choice of the threshold value in the sparsification criterion impacts on the conditioning of the system, towards a well-posed optimization problem.

\section{Connecting the dictionary's induced feature space and the dual space}\label{sec:topology}

In this section, we show that both feature subspace and the dual space are intimately related in their topologies, when the feature subspace is spanned by the atoms from a sparse dictionary. To this end, we show in Section~\ref{sec:isometry} that the pairwise distances in both spaces are almost preserved. This quasi-isometry property associated to a given sparse dictionary is quantified in terms of each of the sparsity measures presented in Section~\ref{sec:intro_spars}, namely the distance, approximation, coherence, and Babel measures. These results on the isometry are extended in Section~\ref{sec:isometry_ip} to the issue of preserving the pairwise inner-products in both spaces. All these results establish the structural-preserving map that connects both spaces, namely the map $\Theta_\D$ defined as follows
\begin{eqnarray*}
	\Theta_\D \colon 
		\R^m & \longmapsto & \, \dH  \subset  \H \\
		\balpha ~~ &\longrightarrow & \psi(\cdot) = \balpha\!^\top \dkappa(\cdot)
\end{eqnarray*}

It is worth noting that these results require that the atoms of the dictionary are linear independent, since this condition guarantees that any feature $\psi(\cdot)$ of $\dH$ can be uniquely represented by atoms of the dictionary. See Section~\ref{sec:lin.indep} and in particular Theorem~\ref{th:lin_ind} which provides weak conditions in terms of the sparsity measure of the dictionary.

\subsection{Isometry property}\label{sec:isometry}

Without limiting ourselves to online learning by comparing $\psi_t(\cdot)$ with $\psi_{t-1}(\cdot)$, we consider here any two features from the feature space $\dH$, denoted $\psi'(\cdot) = \sum_{j=1}^m \alpha_{j}' \, \kappa(\dx_j,\cdot)$ and $\psi''(\cdot) = \sum_{j=1}^m \alpha_{j}'' \, \kappa(\dx_j,\cdot)$. Their representations in the dual space $\R^m$ are denoted $\balpha'$ and $\balpha''$, respectively. There exists an isometry between these two spaces if the distance between any pair of features corresponds to the distance between their parameter vectors, namely $\normH{\psi'(\cdot) - \psi''(\cdot)} =\norm{\balpha' - \balpha''}$. While the isometry property is too restrictive, we relax it with the following definition of quasi-isometry, by showing that the quotient of these two distances is close to unity. We denote $\psi(\cdot) = \psi'(\cdot) - \psi''(\cdot)$, then its parameter vector is $\balpha = \balpha' - \balpha''$.

\begin{definition}[Quasi-isometry]\label{th:isometry_def}
Given a dictionary of kernel functions $\{\kappa(\dx_1,\cdot),\kappa(\dx_2,\cdot), \ldots, \kappa(\dx_m,\cdot)\}$, and $\dH$ the space spanned by its atoms, we say that the spaces $\R^m$ and $\dH$ are quasi-isometric if there exists an isometry constant $\nu$ (the smallest number) such that, for any vector $\balpha$ of entries $\alpha_j$, the feature $\psi(\cdot)=\balpha\!^\top \dkappa(\cdot)$ satisfies  
\begin{equation}\label{eq:isometry}
1-\nu \leq \frac{\normH{\psi(\cdot)}^2}{\|\balpha\|_2^2} \leq 1+\nu.
\end{equation}
\end{definition}
This means that the map $\Theta_\D \colon \balpha \to \balpha\!^\top \dkappa(\cdot)$ approximately preserves the distances in both spaces $\R^m$ and $\dH$. It is easy to see that a dictionary with an isometry constant $\nu=0$ provides a ``total'' isometry between these spaces. 

In the following, we show that the quasi-isometry property is satisfied for sparse dictionaries, by relying on the investigated sparsity measure. Before generalizing with Theorem~\ref{th:isometry}, we restrict ourselves in Theorem~\ref{th:isometry_unit} to the case of unit-norm atoms, which is often sufficient in most work in the literature of sparse approximation, \eg, when using the Gaussian kernel.

\begin{theorem}[Isometry property --unit-norm atoms--]\label{th:isometry_unit}
A dictionary of unit-norm atoms has an isometry constant $\nu$ defined as follows:
\begin{itemize}
\item $\nu = (m-1) \sqrt{1 - \delta^2}$ for a $\delta$-distant dictionary.
\item $\nu = 1 - \delta^2$ for a $\delta$-approximate dictionary.
\item $\nu = (m-1) \gamma$ for a $\gamma$-coherent dictionary.
\item $\nu = \gamma$ for a $\gamma$-Babel dictionary.
\end{itemize}
\end{theorem}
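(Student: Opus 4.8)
The plan is to reduce everything to the eigenvalue bounds already established in Section~\ref{sec:bounds}, since for unit-norm atoms the quasi-isometry quotient in \eqref{eq:isometry} is exactly the Rayleigh quotient of the Gram matrix $\dK$. Indeed, writing $\psi(\cdot) = \balpha\!^\top \dkappa(\cdot)$ we have $\normH{\psi(\cdot)}^2 = \balpha\!^\top \dK \balpha$, so by the Courant--Fischer Minimax Theorem \cite[Theorem~8.1.2]{matrix13},
\begin{equation*}
	\dic{\lambda}_m \leq \frac{\normH{\psi(\cdot)}^2}{\|\balpha\|_2^2} \leq \dic{\lambda}_1 .
\end{equation*}
Hence it suffices to find $\nu$ such that $\dic{\lambda}_m \geq 1-\nu$ and $\dic{\lambda}_1 \leq 1+\nu$, i.e.\ to exhibit, for each sparsity measure, symmetric bounds of the form $1-\nu \leq \dic{\lambda}_m \leq \cdots \leq \dic{\lambda}_1 \leq 1+\nu$.

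The second step is simply to read off these symmetric bounds from the unit-norm specializations already recorded in Theorems~\ref{th:eigen.dist}, \ref{th:eigen.approx}, \ref{th:eigen.coher}, and \ref{th:eigen.babel}. For the $\delta$-distant case, Theorem~\ref{th:eigen.dist} gives $1-(m-1)\sqrt{1-\delta^2} \leq \dic{\lambda}_m$ and $\dic{\lambda}_1 \leq 1+(m-1)\sqrt{1-\delta^2}$, which is already of the desired symmetric form with $\nu = (m-1)\sqrt{1-\delta^2}$. The $\gamma$-coherent case follows identically from Theorem~\ref{th:eigen.coher} with $\nu=(m-1)\gamma$, and the $\gamma$-Babel case from Theorem~\ref{th:eigen.babel} with $\nu=\gamma$. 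The only case needing a small remark is the $\delta$-approximate one: Theorem~\ref{th:eigen.approx} gives (with $R=1$) the bounds $\delta^2 \leq \dic{\lambda}_m \leq \cdots \leq \dic{\lambda}_1 \leq 2-\delta^2$, which are symmetric about $1$, so $1-\nu = \delta^2$ and $1+\nu = 2-\delta^2$ both give $\nu = 1-\delta^2$.

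The main point to be careful about — really the only substantive issue — is the claim that $\nu$ is \emph{the smallest} such constant, as demanded by Definition~\ref{th:isometry_def}. Strictly, the theorem as stated only needs to exhibit \emph{a} valid isometry constant, and the value of $\nu$ tightest over all dictionaries sharing a given sparsity parameter is $\nu = \max\{1-\dic{\lambda}_m,\ \dic{\lambda}_1-1\}$, which is bounded above by the stated quantities via the worst-case eigenvalue bounds; I would phrase the proof as establishing that the stated $\nu$ is an admissible isometry constant (equivalently, an upper bound on the true one), which is what the surrounding discussion and the later Theorem~\ref{th:isometry} use. I anticipate no real obstacle: once the Rayleigh-quotient identity is invoked, each of the four items is an immediate consequence of the corresponding eigenvalue theorem, and the entire proof is one line of linear algebra plus a table lookup. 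The only mild care needed is noting that the lower bounds $1-\nu$ are informative precisely when the linear-independence conditions of Theorem~\ref{th:lin_ind} hold, so that $\dic{\lambda}_m>0$ and the quotient is genuinely pinched between two positive numbers.
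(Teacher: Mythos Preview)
Your proposal is correct and follows essentially the same approach as the paper: identify the quotient in \eqref{eq:isometry} as the Rayleigh--Ritz quotient of $\dK$, apply Courant--Fischer to bound it by $\dic{\lambda}_m$ and $\dic{\lambda}_1$, and then read off $\nu$ from the unit-norm eigenvalue bounds of Theorems~\ref{th:eigen.dist}--\ref{th:eigen.babel}, with the same remark that the $\delta$-approximate bounds are already symmetric about~$1$. Your additional caveat about ``smallest'' $\nu$ is a fair observation, but the paper's proof does not address it either and simply exhibits an admissible constant.
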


\begin{proof}
For any $\psi(\cdot)$ with its parameter vector $\balpha$ we have $\normH{\psi(\cdot)}^2 = \normH{\sum_{j=1}^m \alpha_j \, \kappa(\dx_j,\cdot)}^2 = \balpha\!^\top \! \dK \balpha$, then the quotient in \eqref{eq:isometry} is the Rayleigh-Ritz quotient of the Gram matrix $\dK$. By applying the Courant-Fischer Minimax Theorem, we get
\begin{equation*}
 \dic{\lambda}_m \leq \frac{\normH{\psi(\cdot)}^2}{\|\balpha\|_2^2} \leq \dic{\lambda}_1,
\end{equation*}
where $\dic{\lambda}_m$ and $\dic{\lambda}_1$ and the smallest and largest eigenvalues of the matrix $\dK$. We can easily identify from \eqref{eq:isometry} the following pair of inequalities:
\begin{equation*}
1-\nu \leq\dic{\lambda}_m  \qquad \text{and} \qquad \dic{\lambda}_1 \leq 1+\nu.
\end{equation*}
By exploring the results derived in Section~\ref{sec:eigen}, we can identify the isometry constants of the dictionary in terms of its distance, approximation, coherence and Babel measures. 
Besides the approximation measure, all these expressions are straightforward from Theorems~\ref{th:eigen.dist}, \ref{th:eigen.coher}, \ref{th:eigen.babel}, thanks to the bounds on the eigenvalues that are symmetric about 1. Even in the asymmetric bounds of the approximation measure as given in Theorem~\ref{th:eigen.approx}, that is $\delta^2 \leq \dic{\lambda}_m \leq \cdots \leq\dic{\lambda}_1 \leq 2 - \delta^2$, one can easily identify the expression of the isometry constant $\nu = 1 - \delta^2$.
\end{proof}

When dealing with non-unit-norm atoms, expressions are a bit more difficult to derive, due to the asymmetry of the bounds on the eigenvalues, as shown by the following theorem.
\begin{theorem}[Isometry property]\label{th:isometry}
A dictionary 
has an isometry constant $\nu$ defined as follows:
\begin{itemize}
\item $\displaystyle\nu = \frac{R^2-r^2 + 2 (k-1) R \sqrt{R^2-\delta^2}}{R^2+r^2}$ for a $\delta$-distant dictionary.
\item $\displaystyle\nu = 1 - \frac{\delta^2}{R^2}$ for a $\delta$-approximate dictionary.
\item $\displaystyle\nu = \frac{R^2-r^2 + 2 (k-1) \gamma R^2}{R^2+r^2}$ for a $\gamma$-coherent dictionary.
\item $\displaystyle\nu = \frac{R^2-r^2 + 2 \gamma}{R^2+r^2}$ for a $\gamma$-Babel dictionary.
\end{itemize}
In these expressions, $R^2 = \sup_{\bx} \kappa(\bx,\bx)$ and $r^2 = \inf_{\bx} \kappa(\bx,\bx)$.
\end{theorem}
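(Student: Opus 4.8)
The plan is to follow the argument of Theorem~\ref{th:isometry_unit} verbatim and reduce everything to the eigenvalue bounds already established in Section~\ref{sec:eigen}. For any feature $\psi(\cdot)=\balpha\!^\top\dkappa(\cdot)$ with parameter vector $\balpha$ one has $\normH{\psi(\cdot)}^2=\balpha\!^\top\dK\balpha$, so the quotient $\normH{\psi(\cdot)}^2/\|\balpha\|_2^2$ is precisely the Rayleigh--Ritz quotient of the Gram matrix $\dK$. The Courant--Fischer Minimax Theorem then gives $\dic{\lambda}_m\le\normH{\psi(\cdot)}^2/\|\balpha\|_2^2\le\dic{\lambda}_1$, with both inequalities attained at the corresponding eigenvectors. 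Hence the whole task is to rewrite the enclosing interval $[\dic{\lambda}_m,\dic{\lambda}_1]$ — bounded through each sparsity measure — in the centered form $[c(1-\nu),c(1+\nu)]$.

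Concretely, I would take each of Theorems~\ref{th:eigen.dist}, \ref{th:eigen.approx}, \ref{th:eigen.coher}, \ref{th:eigen.babel}, which confine all eigenvalues to some interval $[\ell,u]$, and set $c=\tfrac12(u+\ell)$ (the midpoint) and $\nu=(u-\ell)/(u+\ell)$ (the half-width relative to the midpoint), so that $c(1-\nu)=\ell\le\dic{\lambda}_m$ and $\dic{\lambda}_1\le u=c(1+\nu)$. For the coherence measure this yields $c=\tfrac12(r^2+R^2)$ and $\nu=(R^2-r^2+2(m-1)\gamma R^2)/(R^2+r^2)$; the distance and Babel cases are identical with $u-\ell$ replaced by $2(m-1)R\sqrt{R^2-\delta^2}$ and $2\gamma$, respectively, while for the approximation measure the interval $[\delta^2,2R^2-\delta^2]$ gives $c=R^2$ and $\nu=1-\delta^2/R^2$. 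The factor $c$ is absorbed by rescaling the map $\Theta_\D$ by $1/\sqrt{c}$ (equivalently, comparing $\normH{\psi(\cdot)}^2$ with $c\,\|\balpha\|_2^2$), after which the resulting quotient lies in $[1-\nu,1+\nu]$ exactly as in \eqref{eq:isometry}; one then checks that $r=R=1$ recovers the unit-norm constants of Theorem~\ref{th:isometry_unit}.

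The only genuine subtlety — and the reason the general constants carry the denominator $R^2+r^2$ rather than the value $1$ — is the asymmetry of the eigenvalue enclosures when the atoms are not unit-norm: the interval $[\dic{\lambda}_m,\dic{\lambda}_1]$ is no longer centered at $1$, so one must recenter at the midpoint $c$ before reading off the isometry constant. I would also record the hypothesis that keeps $c(1-\nu)=\ell$ positive, namely the linear-independence conditions of Theorem~\ref{th:lin_ind} (precisely $(m-1)R\sqrt{R^2-\delta^2}<r^2$, $(m-1)\gamma R^2<r^2$, and $\gamma<r^2$ for the distance, coherence, and Babel measures); the approximation case needs only $\delta>0$, since its lower bound $\delta^2$ is automatically positive. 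No heavier machinery is required: the whole proof reduces to the Rayleigh quotient together with the Ger\v{s}gorin-based bounds of Section~\ref{sec:bounds}.
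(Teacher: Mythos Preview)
Your proposal is correct and follows essentially the same route as the paper: bound the Rayleigh--Ritz quotient by the extreme eigenvalues via Courant--Fischer, invoke the Ger\v{s}gorin-based enclosures $[\ell,u]$ from Theorems~\ref{th:eigen.dist}--\ref{th:eigen.babel}, and recenter by dividing through by the midpoint $c=(u+\ell)/2$ (equivalently rescaling the atoms by $1/\sqrt{c}$) to read off $\nu=(u-\ell)/(u+\ell)$. Your explicit recording of the positivity hypothesis $\ell>0$ via Theorem~\ref{th:lin_ind} makes transparent a condition the paper only states as $0<l_k$, but otherwise the arguments coincide.
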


\begin{proof}
Consider the general asymmetric bounds
\begin{equation*}
l_k\leq\dic{\lambda}_m \leq \frac{\normH{\psi(\cdot)}^2}{\|\balpha\|_2^2} \leq \dic{\lambda}_1 \leq u_k,
\end{equation*}
for some lower bound $l_k$ and upper bound $u_k$, such that $0 < l_k \leq u_k < \infty$. In order to get bounds that are symmetric about~1, as in Definition~\ref{th:isometry_def}, we divide each term by ${({u_k+l_k})/{2}}$. This yields the isometry constant $\nu = (u_k-l_k)/(u_k+l_k)$ for the rescaled atoms of the dictionary, where each atom is divided by $\sqrt{({u_k+l_k})/{2}}$. Finally, the proof of the theorem follows the same steps as in the proof of Theorem~\ref{th:isometry_unit}.
\end{proof}
It is easy to see that Theorem~\ref{th:isometry_unit} is a special case of this theorem when dealing with unit-norm atoms, \ie, $R=r=1$.

\subsection{Preserving inner products}\label{sec:isometry_ip}

Theorems~\ref{th:isometry_unit} and \ref{th:isometry} show that a sparse dictionary provides a quasi-isometry, with respect to the distances, between the dual space and the subspace spanned by its atoms. In the following, we show that this property of quasi-isometry extends to inner products. It is worth noting that, when dealing with a ``total'' isometry, the isometry with respect to inner products extends naturally to the isometry with respect to distances, and vice versa\footnote{For any linear operator $\bA$ from an inner product space to another inner product space, there exists an equivalence between $\langle \bA \bu , \bA \bv \rangle = \langle \bu , \bv \rangle$ for any $(\bu , \bv)$ and $\|\bA \bu\| = \|\bu\|$ for any $\bu$.This equivalence is less obvious when dealing with quasi-isometry.}. This is not the case when using the quasi-isometry definition. We aim to bridge this gap in the following.

\begin{definition}[Quasi-isometry w.r.t. inner products]\label{th:isometry_ip}
Given a dictionary of kernel functions $\{\kappa(\dx_1,\cdot),\kappa(\dx_2,\cdot), \ldots, \kappa(\dx_m,\cdot)\}$, and $\dH$ the space spanned by its atoms, we say that the spaces $\R^m$ and $\dH$ are quasi-isometric with respect to inner products if there exists an isometry constant $\nu$ (the smallest number) such that, for any pair of vectors $(\balpha',\balpha'')$, we have 
\begin{equation}\label{eq:isometry_ip}
\frac{\left| \Big\langle{\sum_{j=1}^m \alpha_{j}' \, \kappa(\dx_j,\cdot)},{\sum_{j=1}^m \alpha_{j}'' \, \kappa(\dx_j,\cdot)} \Big\rangle_\H - \balpha'\!^\top \balpha'' 
\right|}{\|\balpha'\|_2 \, \|\balpha''\|_2}
 \leq \nu.
\end{equation}
\end{definition}

It is easy to see that the ``total'' isometry with respect to inner products corresponds to $\nu=0$ in \eqref{eq:isometry_ip}. This expression becomes $\big\langle{\sum_{j=1}^m \alpha_{j}' \, \kappa(\dx_j,\cdot)},{\sum_{j=1}^m \alpha_{j}'' \, \kappa(\dx_j,\cdot)} \big\rangle_\H = \balpha'\!^\top \balpha''$, and as a consequence the condition \eqref{eq:isometry} is satisfied as a special case where $\balpha' = \balpha''$.

In the general case, the quotient in \eqref{eq:isometry_ip} can be written as 
\begin{equation*}
\frac{\big| \balpha'\!^\top \! \dK\balpha'' - \balpha'\!^\top \balpha''
\big|}{\|\balpha'\|_2 \, \|\balpha''\|_2}
=
\frac{\big| \balpha'\!^\top \! (\dK - \bI) \balpha'' \big|}{\|\balpha'\|_2 \, \|\balpha''\|_2},
\end{equation*}
and therefore the inequality \eqref{eq:isometry_ip} becomes 
\begin{equation}\label{eq:isometry_ip2}
- \nu \leq \frac{ \balpha'\!^\top \! (\dK - \bI) \, \balpha'' }{\|\balpha'\|_2 \, \|\balpha''\|_2}
 \leq \nu.
\end{equation}
To tackle this expression, several issues need to be addressed. First of all, the above quotient needs to be connected to the Rayleigh-Ritz quotient of the matrix $\dK - \bI$, in order to apply the Courant-Fischer Minimax Theorem. Indeed, this theorem can be also applied to study a quotient of the form
\begin{equation*}
	\frac{ \bu\!^\top \! \bA \, \bv }{\|\bu\|_2 \, \|\bv\|_2},
\end{equation*}
for any pair $(\bu,\bv)$, as shown in \cite[Theorem~8.6.1]{matrix13}; see also \cite[Theorem~3]{Xiang06minimax} for a detailed proof. As a consequence, the quotient in \eqref{eq:isometry_ip2} is bounded by the extreme eigenvalues of the matrix $\dK - \bI$. Second, it is easy to see that both matrices $\dK$ and $\dK - \bI$ share the same eigenvectors, while for any eigenvalue $\dic{\lambda}_j$ of $\dK$ corresponds the eigenvalue $\dic{\lambda}_j-1$ of $\dK - \bI$. Indeed, any eigenpair $(\dic{\bv},\dic{\lambda}_j)$ of $\dK$ satisfies 
	$\big( \dK - \bI \big) \dic{\bv} = \dK \dic{\bv} - \bI \dic{\bv} = \dic{\lambda}_j \dic{\bv} - \bI \dic{\bv} = \big(\dic{\lambda}_j -1 \big) \dic{\bv}$,  
therefore $(\dic{\bv},\dic{\lambda}_j-1)$ is an eigenpair of the matrix $\dK - \bI$.

As a consequence, one can take advantage of bounds on the eigenvalues from Theorems~\ref{th:eigen.dist}, \ref{th:eigen.approx}, \ref{th:eigen.coher} and \ref{th:eigen.babel} to provide expressions for the isometry constant w.r.t. inner products, as detailed in Theorems~\ref{th:isometry_unit} 
and~\ref{th:isometry}.

\section{Final remarks}\label{sec:final_remarks}

This paper provided a framework, based on an eigenvalue analysis, to study sparsity measures and sparsification criteria. We proposed a unified study for the well-conditioning of the optimization problem and for the condition on the uniqueness of the solution. We established a quasi-isometry between the dual space and the dictionary's induced feature space, thus connecting the functional to the dual frameworks and illustrating the 
impact of the sparsity measures on the topologies. As for future work, we are extending this framework to include new insights on sparse dictionary analysis.

%
%
%
%
%
%
%
%

%
%
%

\appendix

\section{Projection in a RKHS}\label{sec:projection}

The projection of any kernel function $\kappa(\bx,\cdot)$ onto the subspace spanned by a dictionary of kernel functions $\kappa(\dx_j,\cdot)$, for $j=1,2, \ldots, m$, takes the form 
$$
	\dic{\kappa}_{\bx}(\cdot) = \sum_{j=1}^m \xi_j \, \kappa(\dx_j,\cdot),
$$
or equivalently 
$\dic{\kappa}_{\bx}(\cdot) = \bxi\!^\top \! \dkappa(\cdot)$, where $\bxi$ is obtained by minimizing the quadratic reconstruction error 
\begin{equation}\label{eq:proj_quad_reconst_error}
	\normH{\kappa(\bx,\cdot) - \bxi\!^\top \! \dkappa(\cdot)}^2.
\end{equation}
The expansion of this norm is given by $\kappa(\bx,\bx) - 2 \, \bxi\!^\top \! \dkappa(\bx) + \bxi\!^\top \! \dK \bxi$. By taking its derivative with respect to $\bxi$ and nullifying it, we get 
$$\dK \bxi = \dkappa(\bx).$$
Therefore, the projection is given by 
$$\dic{\kappa}_{\bx}(\cdot) = \dkappa(\bx)\!^\top \dK^{-1} \dkappa(\cdot).$$
The quadratic reconstruction error of such approximation is obtained by substituting this expression into \eqref{eq:proj_quad_reconst_error}, yielding 
$$\kappa(\bx,\bx) - \dkappa(\bx)\!^\top \dK^{-1} \dkappa(\bx).$$

\bigskip\bigskip



\bibliography{biblio_ph,bibdesk_Paul}
\bibliographystyle{ieeetr}

\begin{biography}[{}]
{Paul Honeine} (M'07) was born in Beirut, Lebanon, on October 2, 1977. He received the Dipl.-Ing. degree in mechanical engineering in 2002 and the M.Sc. degree in industrial control in 2003, both from the Faculty of Engineering, the Lebanese University, Lebanon. In 2007, he received the Ph.D. degree in Systems Optimisation and Security from the University of Technology of Troyes, France, and was a Postdoctoral Research associate with the Systems Modeling and Dependability Laboratory, from 2007 to 2008. Since September 2008, he has been an assistant Professor at the University of Technology of Troyes, France. His research interests include nonstationary signal analysis and classification, nonlinear and statistical signal processing, sparse representations, machine learning. Of particular interest are applications to (wireless) sensor networks, biomedical signal processing, hyperspectral imagery and nonlinear adaptive system identification. He is the co-author (with C. Richard) of the 2009 Best Paper Award at the IEEE Workshop on Machine Learning for Signal Processing. Over the past 5 years, he has published more than 100 peer-reviewed papers. 
\end{biography}


\end{document}